\documentclass{article}

\usepackage{microtype}
\usepackage{graphicx}
\usepackage{subfigure}
\usepackage{booktabs} % for professional tables
\usepackage{enumitem}
\setlist[itemize]{leftmargin=*}
\usepackage{pifont}
\usepackage{mathrsfs}
\usepackage{listings}
\usepackage[dvipsnames]{xcolor}
\usepackage[table]{xcolor}  % 用于设置表格颜色

\usepackage{hyperref}
\usepackage[accepted]{icml2026}

\usepackage{amsmath}
\usepackage{amssymb}
\usepackage{mathtools}
\usepackage{amsthm}
\usepackage[table]{xcolor}

\usepackage[capitalize,noabbrev]{cleveref}

\theoremstyle{plain}
\newtheorem{theorem}{Theorem}[section]
\newtheorem{proposition}[theorem]{Proposition}
\newtheorem{lemma}[theorem]{Lemma}

\theoremstyle{definition}
\newtheorem{definition}[theorem]{Definition}

\theoremstyle{remark}

\usepackage[textsize=tiny]{todonotes}

\icmltitlerunning{Unifying Value Alignment and Assignment in Cross-Domain Offline Reinforcement Learning with Heterogeneous Datasets}

\begin{document}

\twocolumn[
  \icmltitle{Unifying Value Alignment and Assignment in Cross-Domain Offline Reinforcement Learning with Heterogeneous Datasets}

  % It is OKAY to include author information, even for blind submissions: the
  % style file will automatically remove it for you unless you've provided
  % the [accepted] option to the icml2026 package.

  % List of affiliations: The first argument should be a (short) identifier you
  % will use later to specify author affiliations Academic affiliations
  % should list Department, University, City, Region, Country Industry
  % affiliations should list Company, City, Region, Country

  % You can specify symbols, otherwise they are numbered in order. Ideally, you
  % should not use this facility. Affiliations will be numbered in order of
  % appearance and this is the preferred way.
  \icmlsetsymbol{equal}{*}

  \begin{icmlauthorlist}
    \icmlauthor{Zhongjian Qiao}{1}
    \icmlauthor{Jiafei Lyu}{2}
    \icmlauthor{Chenjia Bai}{3}
    \icmlauthor{Peisong Wang}{4}
    \icmlauthor{Siyang Gao}{1}
    \icmlauthor{Shuang Qiu${}^\dag$}{1}

  \end{icmlauthorlist}

  \icmlaffiliation{1}{City University of Hong Kong}
  \icmlaffiliation{2}{Tencent}
  \icmlaffiliation{3}{Institute of Artificial Intelligence (TeleAI), China Telecom}
  \icmlaffiliation{4}{Institute of Automation, Chinese Academy of Sciences. ${}^\dag$Corresponding Author}
    
  \icmlcorrespondingauthor{Shuang Qiu}{shuanqiu@cityu.edu.hk}

  % You may provide any keywords that you find helpful for describing your
  % paper; these are used to populate the "keywords" metadata in the PDF but
  % will not be shown in the document
  \icmlkeywords{Machine Learning, ICML}

  \vskip 0.3in
]

% this must go after the closing bracket ] following \twocolumn[ ...

% This command actually creates the footnote in the first column listing the
% affiliations and the copyright notice. The command takes one argument, which
% is text to display at the start of the footnote. The \icmlEqualContribution
% command is standard text for equal contribution. Remove it (just {}) if you
% do not need this facility.

% Use ONE of the following lines. DO NOT remove the command.
% If you have no special notice, KEEP empty braces:
\printAffiliationsAndNotice{}  % no special notice (required even if empty)
% Or, if applicable, use the standard equal contribution text:
% \printAffiliationsAndNotice{\icmlEqualContribution}

\begin{abstract}
% Cross-domain offline Reinforcement Learning (RL) aims to lean a well-performing agent in the target domain, with both a limited target domain dataset and a source domain dataset which contains sufficient data. The main challenge in cross-domain offline RL is the dynamics misalignment between the target domain and source domain, thus directly merging two datasets could result in performance degradation. Previous studies mainly focus on aligning the source domain dynamics with target domain dynamics. A recent work, DVDF~\cite{qiao2025cross}, highlights the significance of \textit{value alignment} for cross-domain offline RL, and leverages an advantage function pre-trained in the source domain for value-aligned data filtering. These works typically consider source domain with uni-modal dynamics shifts. However, practical scenarios often involve multi-modal dynamics shifts with varied shift types or intensities. In this work, we 
Cross-domain offline reinforcement learning (RL) aims to learn a policy in the target domain with a limited target domain dataset and a source domain dataset that exhibits a dynamics shift. Training directly on the original source dataset typically leads to performance collapse. Recent studies perform data filtering from the perspective of dynamics alignment or value alignment to enable efficient policy transfer. However, these studies are typically validated on single-domain or single-behavior-policy source datasets. In this work, we explore a more general \textit{heterogeneous} cross-domain offline RL setting, where the source datasets may be collected from multiple source domains by diverse behavior policies. We first uncover a critical yet overlooked issue in this setting: \textit{value misassignment}. Empirically and theoretically, we demonstrate that value misassignment can undermine value alignment, mislead data filtering toward selecting suboptimal samples, and loosen the suboptimality gap, thereby degrading the agent’s performance. To address this issue, we propose V2A, which integrates dynamics alignment, value alignment, and value assignment. V2A first employs temporally-consistent modality representation learning to extract dynamics modalities from the source dataset, followed by modality-aware advantage learning to rectify value alignment. Finally, it adopts a data filtering paradigm to selectively share source data for policy learning. Empirical results show that V2A significantly outperforms strong baseline methods under general heterogeneous cross-domain offline RL settings. Code is available at \url{https://github.com/zq2r/V2A.git}.
\end{abstract}

\section{Introduction}
\label{sec:intro}

Reinforcement learning (RL)~\cite{sutton1998reinforcement} has progressed remarkably in fields like embodied AI~\cite{intelligence2025pi_,amin2025pi,jiao2025tcpo} and large language models~\cite{rafailov2023direct,guo2025deepseek}. However, online RL requires intense environmental interactions, which can be expensive and risky in domains like healthcare. Offline RL~\cite{levine2020offline,prudencio2023survey} instead learns from pre-collected datasets, thereby eliminating online interaction costs. Yet, in many real-world scenarios, only a limited amount of target domain data is available, which can substantially limit the performance of offline RL. To address this challenge, recent studies~\cite{liu2024beyond,wen2024contrastive,lyu2025cross} explore Cross-Domain Offline RL, which leverages both a limited target dataset and an abundant dataset from a distinct but similar source domain to facilitate target policy learning.

\begin{table*}[]
    \centering
    \caption{Comparison between V2A and recent cross-domain offline RL baselines.}
    \begin{tabular}{c|c|c|c|c}
        \toprule
        & Dynamics Alignment & Value Alignment & Value Assignment & Heterogeneous Dataset \\
        \midrule
        IGDF~\cite{wen2024contrastive} & {\color{green} \ding{51}} & {\color{red}\ding{55}} & {\color{red}\ding{55}} & {\color{red}\ding{55}} \\
        OTDF~\cite{lyu2025cross} &  {\color{green} \ding{51}} & {\color{red}\ding{55}} & {\color{red}\ding{55}} & {\color{red}\ding{55}} \\
        DVDF~\cite{qiao2025cross} &  {\color{green} \ding{51}} & {\color{green} \ding{51}} & {\color{red}\ding{55}} & {\color{red}\ding{55}} \\
        \textbf{V2A (Ours)} &  {\color{green} \ding{51}} & {\color{green} \ding{51}} & {\color{green} \ding{51}} & {\color{green} \ding{51}} \\
        
        \bottomrule
    \end{tabular}
    \vspace{-0.3cm}
    \label{tab:intro_compare}
\end{table*}

Although it is promising to incorporate additional source domain data to mitigate the data scarcity issue, directly mixing the source and target domain datasets for training would degrade the policy performance in the target domain. The main reason is that the source and target domains may share different transition dynamics~\cite{wen2024contrastive, lyu2025cross}, leading to the out-of-distribution (OOD) dynamics issue~\cite{liu2024beyond}. To address this, recent works such as IGDF~\cite{wen2024contrastive} and OTDF~\cite{lyu2025cross} introduce dynamics-aligned data filtering, which measures the dynamics discrepancy between source and target domains and selectively shares source samples accordingly. However, these studies do not explicitly account for the optimality of source samples, which may result in suboptimal performance 
% on mixed-quality source datasets. 
when source datasets comprise trajectories of mixed quality. More recently, DVDF~\cite{qiao2025cross} introduces the concept of \textit{value alignment} to ensure that the selected samples are not only dynamics-consistent but also of high quality, making it more suitable for mixed-quality source datasets. 
Nevertheless, these studies have primarily validated their effectiveness within the setting where the source dataset is drawn from a single environment. In practical applications, such as robotic learning, source datasets may not only contain data of mixed quality but also be aggregated from multiple environments with diverse physical parameters, highlighting a fundamental gap between previous studies and this realistic setting.

In this paper, we explore a more general cross-domain offline RL setting, where source datasets could originate from multiple source domains with distinct transition dynamics and are collected with diverse behavior policies, which we refer to as \textit{heterogeneous} cross-domain offline RL. In this setting, we first uncover a critical yet previously overlooked issue: \textit{value misassignment}. We further provide empirical and theoretical insights that value misassignment can undermine the effect of value alignment, mislead data filtering toward selecting suboptimal source data, and loosen the suboptimality gap in the target domain, thereby degrading the agent's performance in the target domain.

% To address this challenge, we propose V2A (Unifying \textbf{V}alue \textbf{A}lignment and \textbf{A}ssignment), a simple yet effective solution that extends DVDF to the heterogeneous cross-domain offline RL setting. 

To resolve this challenge, we propose V2A, a simple yet effective framework that unifies \underline{\textbf{V}}alue \underline{\textbf{A}}lignment and \underline{\textbf{A}}ssignment in heterogeneous cross-domain offline RL, going well beyond DVDF that focuses solely on value alignment. Our key insight is that mitigating value misassignment requires disentangling dynamics modalities within the source dataset. V2A first learns temporally-consistent modality representations in an Expectation-Maximization (EM) manner~\cite{dempster1977maximum} to infer the underlying dynamics for each source trajectory. We then relabel the source dataset and propose modality-aware advantage learning, ensuring advantage values reflect the true sample optimality under specific dynamics. Finally, we leverage data filtering to selectively share source samples for target policy learning. We present the major differences between V2A and recent cross-domain offline RL baselines in Table~\ref{tab:intro_compare}. Our main contributions are summarized as follows:

\begin{itemize}[topsep=0pt, partopsep=0pt, parsep=0pt, itemsep=2pt]
    \item We explore a more general but underexplored heterogeneous cross-domain offline RL setting and, within this setting, identify a critical yet previously overlooked value misassignment issue.
    
    \item We propose V2A that mitigates value misassignment with temporally-consistent modality representation learning and modality-aware advantage learning. V2A effectively integrates dynamics alignment, value alignment, and value assignment in a unified framework.
    \item We evaluate V2A on extensive cross-domain RL tasks with heterogeneous source datasets. Empirical results demonstrate that V2A significantly outperforms strong baselines across multiple tasks and datasets.

\end{itemize}

\section{Preliminaries}

\textbf{Reinforcement Learning.} We model a RL problem as a Markov Decision Process (MDP)~\cite{puterman1990markov} defined by the six-tuple $\mathcal{M}=(\mathcal{S},\mathcal{A},P,r,\rho,\gamma)$ where $\mathcal{S}$ denotes the state space, $\mathcal{A}$ is the action space, $P: \mathcal{S}\times\mathcal{A}\rightarrow\Delta(\mathcal{S})$ represents the transition dynamics, $\Delta(\cdot)$ is the probability simplex, $r(s,a):\mathcal{S}\times\mathcal{A}\rightarrow [-r_\mathrm{max},r_\mathrm{max}]$ is the reward function, $\rho$ is the initial state distribution, and $\gamma\in[0,1)$ is the discount factor. The objective of RL is to learn a policy $\pi:\mathcal{S}\rightarrow\Delta(\mathcal{A})$ that maximizes the expected discounted cumulative return $J_\mathcal{M}(\pi):=\mathbb{E}_\pi[\sum_{t=0}^\infty\gamma^tr(s_t,a_t)]$.

\textbf{Cross-Domain RL.} In cross-domain RL, we assume that we have access to a \textit{source domain} MDP as $\mathcal{M}_\mathrm{src}=(\mathcal{S},\mathcal{A},P_\mathrm{src},r,\rho,\gamma)$, and a \textit{target domain} MDP as $\mathcal{M}_\mathrm{tar}=(\mathcal{S},\mathcal{A},P_\mathrm{tar},r,\rho,\gamma)$, where the two MDPs only differ in their transition dynamics and share the identical state and action space, as widely studied in prior works~\cite{wen2024contrastive,lyu2025cross}. Traditionally, $\mathcal{M}_\mathrm{src}$ represents a source domain with a single transition model. Beyond the prior setting, our work allows $\mathcal{M}_\mathrm{src}$ to be a mixed source domain with diverse transition dynamics. In the offline setting, we are only accessible to a source domain dataset $\mathcal{D}_\mathrm{src}=\{(s_i,a_i,r_i,s_{i+1})\}_{i=1}^{N_1}$ and a target domain dataset $\mathcal{D}_\mathrm{tar}=\{(s_i,a_i,r_i,s_{i+1})\}_{i=1}^{N_2}$, where $N_1\gg N_2$. Cross-domain offline RL aims to leverage $\mathcal{D}_\mathrm{src}\cup\mathcal{D}_\mathrm{tar}$ to learn a well-performing policy in the target domain.

\textbf{Data Filtering for Cross-Domain Offline RL.} To address the OOD dynamics issue when directly using $\mathcal{D}_\mathrm{src}\cup\mathcal{D}_\mathrm{tar}$ for training, several studies~\cite{wen2024contrastive, lyu2025cross} explore dynamics-aligned data filtering, where a \textit{score function} $h(s,a,s^\prime)$ is utilized to measure the dynamics alignment.  IGDF~\cite{wen2024contrastive} and OTDF~\cite{lyu2025cross} obtain $h(\cdot)$ with contrastive learning and optimal transport, respectively. Furthermore, the recent work DVDF~\cite{qiao2025cross} proposes the concept of value alignment, which considers the optimality of source samples, raising the question of \emph{whether it can handle mixed-quality source datasets}.
%and \textit{might} have the potential for handling mixed-quality source datasets. 
DVDF defines a new score function $g(\cdot)$ as
\begin{equation}
    g(s,a,s^\prime)=\lambda\cdot h(s,a,s^\prime)+(1-\lambda)\cdot A^\star_\mathrm{insrc}(s,a),
\end{equation}
where $\lambda$ is the trade-off coefficient, $A^\star_\mathrm{insrc}$ is the in-sample~\cite{kostrikov2021offline} optimal advantage function on $\mathcal{D}_\mathrm{src}$, and is approximated by an advantage function pre-trained on $\mathcal{D}_\mathrm{src}$ with Sparse-QL~\cite{xu2023offline} algorithm.

\begin{figure*}
    \centering
	\subfigure[] {\includegraphics[width=.23\linewidth]{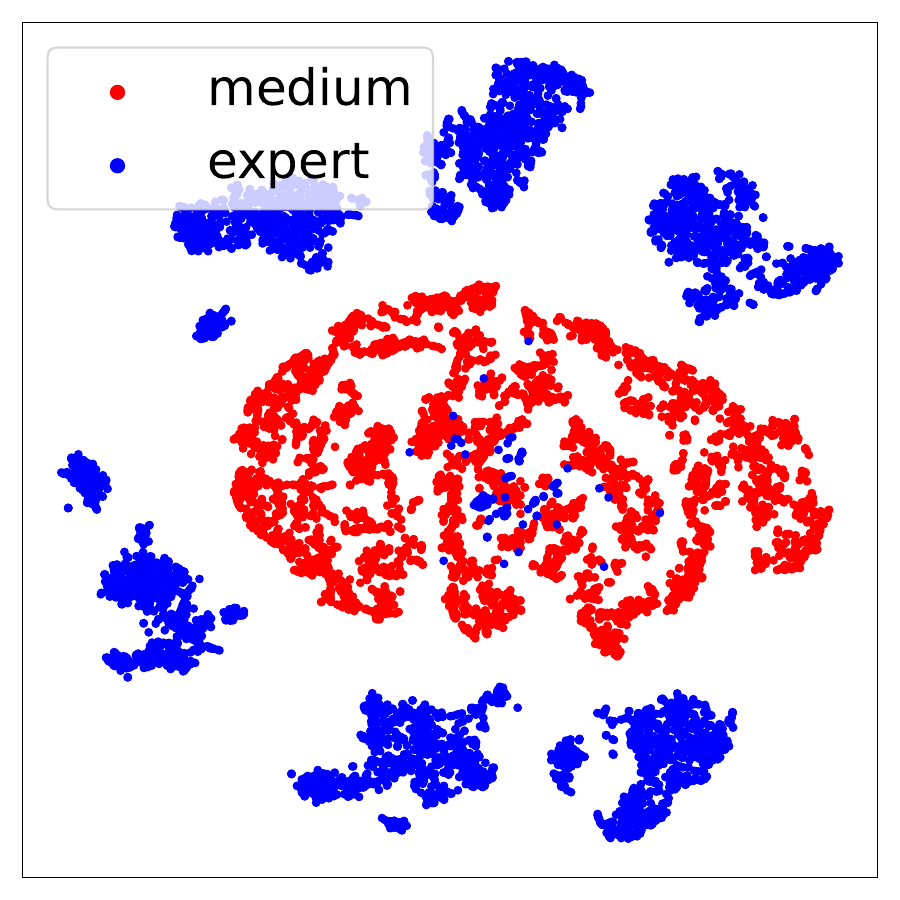}}
     % \label{fig:parameterstudynumberoftraj}
	\subfigure[] {\includegraphics[width=.23\linewidth]{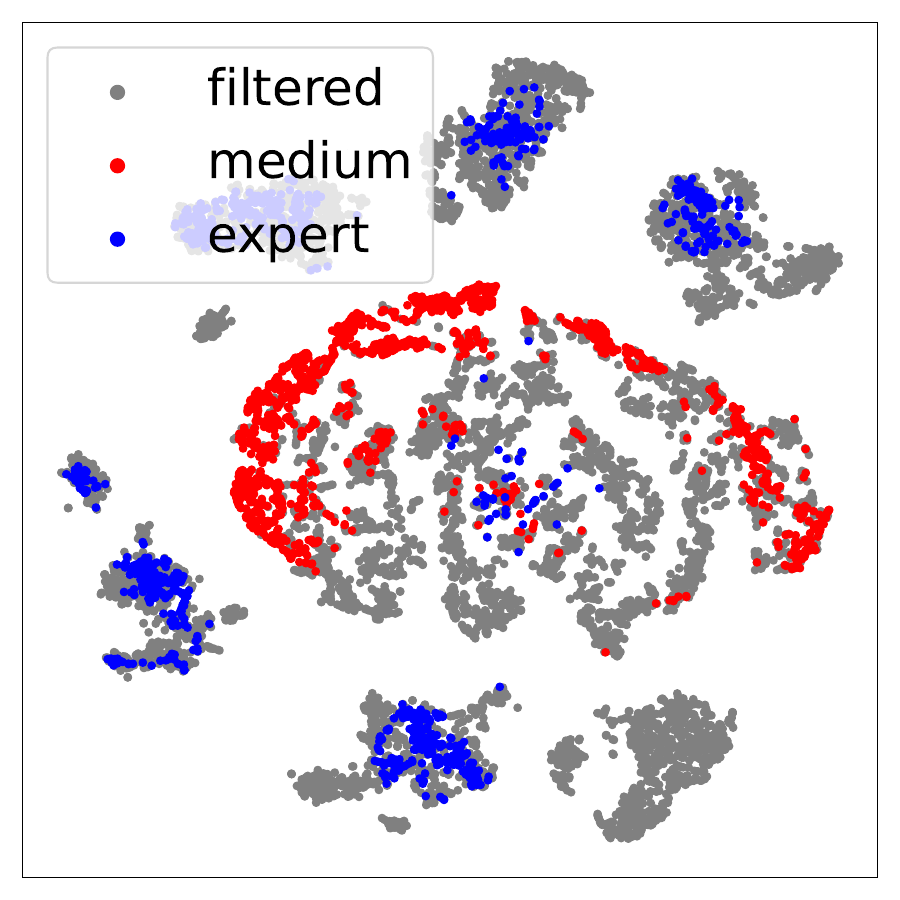}
     % \label{fig:parameterstudyalpha}
     }
    \subfigure[] 
    {\includegraphics[width=.23\linewidth]{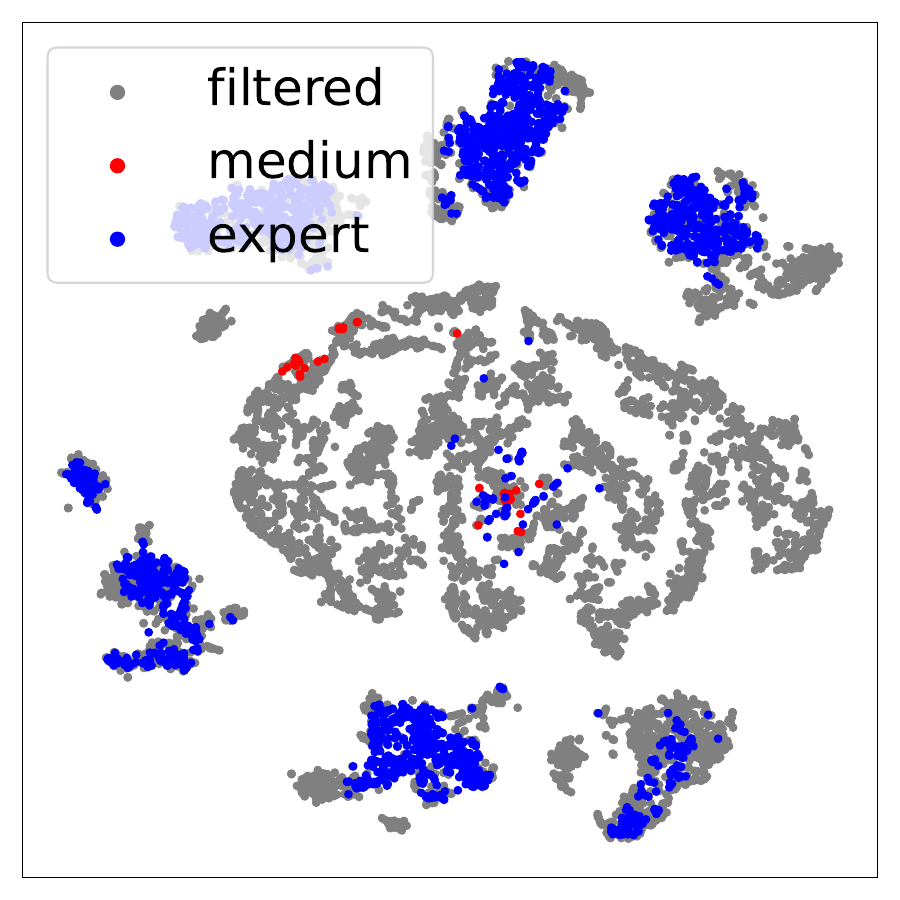}
         % \label{fig:parameterstudyalpha}
         }
    \subfigure[] 
    {\includegraphics[width=.23\linewidth]{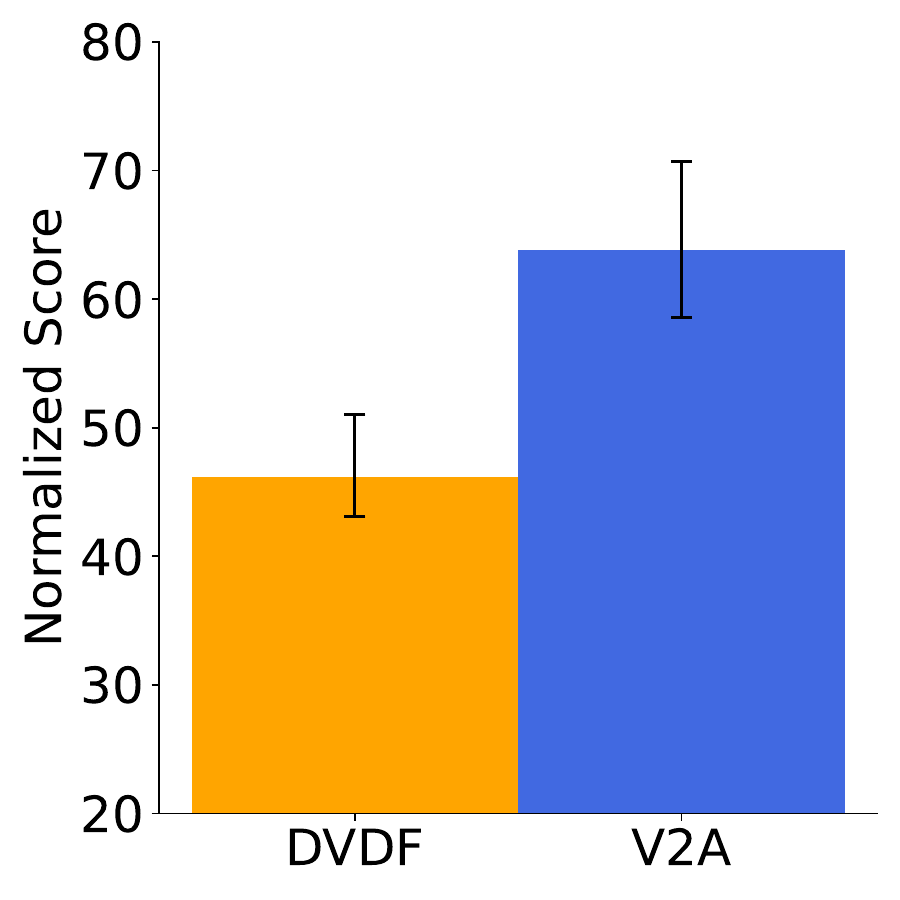}
         % \label{fig:parameterstudyalpha}
         }
    \caption{\textbf{(a)} Source dataset visualization. \textbf{(b)} Source data filtering visualization of DVDF. \textbf{(c)} Source data filtering visualization of V2A. \textbf{(d)} Performance comparison for DVDF and V2A on the target domain. }
    
    % \vspace{-0.1cm}
    \label{fig:motivation}
\end{figure*}

\section{Motivating Example}
\label{sec:motivation}
This section uses an example to demonstrate that \emph{DVDF may select \textbf{suboptimal} source domain data in the heterogeneous setting, thereby \textbf{hindering} effective target policy learning}.

\textbf{Experimental Setup.} We consider the following heterogeneous cross-domain offline RL scenario. We take the \texttt{halfcheetah-v2} task from MuJoCo~\cite{todorov2012mujoco} as the target domain, and introduce medium-level kinematic shifts to \texttt{halfcheetah-v2} by modifying the corresponding \texttt{.xml} file to create the source domain, as detailed in Appendix~\ref{sec:environ_setting}. For the target domain dataset, we sample 10\% data from the \texttt{halfcheetah-medium-v2} dataset in D4RL. For the source domain dataset, we construct a mixed dataset consisting of: (1) 0.5M expert-level data sampled from the target domain, and (2) 0.5M medium-level data sampled from the source domain with dynamics shifts. We visualize the source domain dataset with t-SNE~\cite{maaten2008visualizing} in Figure~\ref{fig:motivation} (a), where \textcolor{blue}{blue} points represent expert data and \textcolor{red}{red} points represent medium data. We implement DVDF and V2A based on IGDF, and select 25\% source domain data for policy learning. 

\textbf{Results and Analysis.} We first visualize the data filtering results of DVDF and V2A, as shown in Figure~\ref{fig:motivation} (b) and (c), respectively. In the visualizations, \textcolor{gray}{gray} points represent filtered samples, \textcolor{blue}{blue} points indicate selected expert data, and \textcolor{red}{red} points denote selected medium data. Recall that DVDF aims to select source data that exhibits low dynamics shift and high quality; consequently, it is expected to select exclusively expert data from the target domain. However, Figure~\ref{fig:motivation} (b) reveals that DVDF retains a significant portion of medium-quality source data, which are suboptimal samples for policy learning. In contrast, Figure~\ref{fig:motivation} (c) demonstrates that V2A predominantly selects expert data, aligning more closely with the desired outcome. Moreover, we evaluate DVDF and V2A on the target domain and present the performance comparison in Figure~\ref{fig:motivation} (d). We observe that V2A achieves a normalized score of \textbf{63}, compared to DVDF’s score of 46, representing a \textbf{37\%} performance improvement. We attribute this discrepancy to the value misassignment issue, which will be discussed in the following section.

\vspace{-0.1cm}
\section{Problem Setup and Value Misassignment}

\vspace{-0.1cm}

In this section, we formally formulate our problem settings. In Section~\ref{sec:heterogeneous}, we demonstrate the setting and challenge of heterogeneous cross-domain offline RL. Subsequently, in Section~\ref{sec:value_misassignment}, we systematically investigate and identify the issue of value misassignment within this setting, uncovering a fundamental limitation in prior work.

\vspace{-0.1cm}

\subsection{Heterogeneous Cross-Domain Offline RL}
\label{sec:heterogeneous}

\vspace{-0.1cm}

For a source domain dataset $\mathcal{D}_\mathrm{src}$ consisting of trajectories collected under the behavior policy $\mu_\mathrm{src}$ and dynamics $P_\mathrm{src}$, we first present the default cross-domain offline RL setting with unimodal behavior policy and dynamics.
\begin{definition}[Unimodal source dataset]
\label{def:unimodal}
We say the source dataset behavior policy $\mu_\mathrm{src}$ and source domain dynamics $P_\mathrm{src}$ are unimodal when they follow:
\begin{equation*}
    \begin{aligned}
        \mu_\mathrm{src}(\cdot|s)&\sim\mathcal{N}\left(m_1(s),\sigma_1^2(s)\right),\\
        P_\mathrm{src}(\cdot|s,a)&\sim\mathcal{N}\left(m_2(s,a),\sigma^2_2(s,a)\right),
    \end{aligned}
\end{equation*}
where $\mathcal{N}(m,\sigma^2)$ denotes the Gaussian distribution with mean $m$ and variance $\sigma^2$.
\end{definition}
This setting assumes the source domain dataset is collected in a single domain with a single behavior policy, which is overly strict and often impractical. This work considers a more general heterogeneous cross-domain offline RL setting where the source dataset is heterogeneous, originating from multiple domains via diverse behavior policies. Thus $\mu_\mathrm{src}$ and $P_\mathrm{src}$ can be characterized by multi-modal distributions. Inspired by recent works \cite{wang2024learning} in single-domain offline RL, we can define the heterogeneous source dataset via a Gaussian Mixture Model.
\begin{definition}[Heterogeneous source dataset]
\label{def:multi_modal_behavior}
A source dataset $\mathcal{D}_\mathrm{src}$ is heterogeneous if it can be divided into $n$ sub-datasets $\{\mathcal{D}_\mathrm{src}^i\}_{i=1}^n$ with unimodal behavior policy and dynamics $\{\mu^i_\mathrm{src},P^i_\mathrm{src}\}_{i=1}^n$. Then $\mu_\mathrm{src}$ and $P_\mathrm{src}$ can be expressed as Gaussian mixture models with
\begin{equation*}
\begin{aligned}
    \mu_\mathrm{src}(\cdot|s)&=\textstyle\sum_{i=1}^n\alpha_i\cdot \mu_\mathrm{src}^i(\cdot|s),\\
    P_\mathrm{src}(\cdot|s,a)&=\textstyle\sum_{i=1}^n\alpha_i\cdot P_\mathrm{src}^i(\cdot|s,a),
\end{aligned}
\end{equation*}
where $\left\{\alpha_i=\frac{|\mathcal{D}^i_\mathrm{src}|}{|\mathcal{D}_\mathrm{src}|}\right\}_{i=1}^n$ are the mixing coefficients for each sub-dataset, and $\sum_{i=1}^n\alpha_i=1$.
\end{definition}

Recent studies in cross-domain offline RL, such as IGDF~\cite{wen2024contrastive} and OTDF~\cite{lyu2025cross}, rely exclusively on dynamics-aligned data filtering. Consequently, they overlook the varying quality of source data, rendering them ineffective for datasets induced by multi-modal behavior policies. In contrast, DVDF~\cite{qiao2025cross} incorporates both dynamics and value alignment, ostensibly making it better suited for heterogeneous settings. However, as shown in Section~\ref{sec:motivation}, the performance of DVDF remains limited in such scenarios. We attribute this shortfall to a critical yet overlooked phenomenon: \textit{value misassignment}. This issue undermines the value alignment DVDF depends on, as elaborated in the following subsection.

% \vspace{-0.1cm}

\subsection{Value Misassignment in Cross-Domain Offline RL}
\label{sec:value_misassignment}

% \vspace{-0.1cm}

In this section, we formally identify the value misassignment issue in heterogeneous cross-domain offline RL. We first present the concept of dynamics misalignment commonly considered in previous works~\cite{xu2023cross,wen2024contrastive,lyu2025cross}.

\begin{definition}[Dynamics Misalignment]

For the source domain dynamics $P_\mathrm{src}(\cdot|s,a)$ and target domain dynamics $P_\mathrm{tar}(\cdot|s,a)$, the \textit{dynamics misalignment} between $P_\mathrm{src}(\cdot|s,a)$ and $P_\mathrm{tar}(\cdot|s,a)$ is measured as $\mathrm{Dis}\left(P_\mathrm{src}(\cdot|s,a), P_\mathrm{tar}(\cdot|s,a)\right)$, where $\mathrm{Dis}(\cdot)$ is the distribution distance measure, such as total variation $D_{\mathrm{TV}}(\cdot)$.
    
\end{definition}

In addition to dynamics misalignment, DVDF~\cite{qiao2025cross} proposes an orthogonal concept of value misalignment, which has been shown to be critical for cross-domain offline RL, both theoretically and empirically.

\begin{definition}[Value Misalignment]
\label{def:value_misalign}
For a source domain $\mathcal{M}_\mathrm{src}$ and source dataset $\mathcal{D}_\mathrm{src}$ with behavior policy $\mu_\mathrm{src}$, the \textit{value misalignment} is defined as $\Delta J_{\mathcal{M}_\mathrm{src}}:=\left|J_{\mathcal{M}_\mathrm{src}}(\mu_\mathrm{src})-J_{\mathcal{M}_\mathrm{src}}(\pi^\star_\mathrm{insrc})\right|$, where $\pi^\star_\mathrm{insrc}$ denotes the in-sample optimal policy on $\mathcal{D}_\mathrm{src}$. Intuitively, $\Delta J_{\mathcal{M}_\mathrm{src}}$ measures the suboptimality gap of $\mu_\mathrm{src}$ on the source domain. 
    
\end{definition}

% \begin{lemma}[suboptimality gap with unimodal source domain] 
% \label{lemma:unimodal}Denote the MDP of the source domain and target domain as $\mathcal{M}_\mathrm{src}$ and $\mathcal{M}_\mathrm{tar}$, and $\mathcal{M}_\mathrm{src}$ is unimodal. For a policy $\pi$ trained on $\mathcal{M}_\mathrm{src}$, the suboptimality gap of $\pi$ on $\mathcal{M}_\mathrm{tar}$ can be bounded as follows,
% \begin{equation}
%     \begin{aligned}
%     \mathrm{SubOpt}&\leq|J_{\mathcal{M}_\mathrm{src}}(\pi)-J_{\mathcal{M}_\mathrm{src}}(\pi^\star_\mathrm{src})|\\
%     &+C\cdot\sup_{s,a}\left[D_{\mathrm{TV}}(P_\mathrm{src}(\cdot|s,a),P_\mathrm{tar}(\cdot|s,a))\right]
%     \end{aligned}
% \end{equation}
% where $C$ is a positive constant.
% \end{lemma}

Before introducing the concept of value misassignment, we first analyze the suboptimality gap of a policy $\hat{\pi}$ trained on the heterogeneous source dataset $\mathcal{D}_\mathrm{src}$ and evaluated on the target domain $\mathcal{M}_\mathrm{tar}$. Specifically, we denote the suboptimality gap of $\hat{\pi}$ on the target domain as 
\begin{align*}
\mathrm{SubOpt}(\hat{\pi}):=J_{\mathcal{M}_\mathrm{tar}}(\hat{\pi})-J_{\mathcal{M}_\mathrm{tar}}(\pi^\star_\mathrm{tar}),
\end{align*}
where $\pi^\star_\mathrm{tar}$ is the optimal policy in the target domain. 
% Then $\mathrm{SubOpt}$ can be bounded as in Proposition~\ref{prop:multimodal}.

\begin{proposition}[Suboptimality gap with heterogeneous source datasets]
\label{prop:multimodal}Denote the MDP of the target domain and multiple source domains as $\mathcal{M}_\mathrm{tar}$ and $\{\mathcal{M}^i_\mathrm{src}\}_{i=1}^n$. For a policy $\hat{\pi}$ trained on $\mathcal{D}_\mathrm{src}$ defined in Definition~\ref{def:multi_modal_behavior}, under some mild assumptions, the suboptimality gap on $\mathcal{M}_\mathrm{tar}$ can be bounded as
\begin{equation*}
% \label{eq:sub_gap_hetero_main}
    \begin{aligned}
        &\mathrm{SubOpt}(\hat{\pi})\leq \textstyle\sum_{i=1}^n {\color{blue}\alpha_i} \cdot \big( 
            \underbrace{|J_{\mathcal{M}^i_\mathrm{src}}(\mu^i_\mathrm{src}) - J_{\mathcal{M}^i_\mathrm{src}}(\pi^{i\star}_{\mathrm{insrc}})|}_{\mathrm{value\,misalignment}} \\ &\qquad  + C \cdot
        \underbrace{\textstyle\sup_{s,a} \left[ D_{\mathrm{TV}}(P^i_\mathrm{src}(\cdot|s,a), P_\mathrm{tar}(\cdot|s,a)) \right]}_{\mathrm{dynamics\,misalignment}} \big)+\Delta,
    \end{aligned}
\end{equation*}
where we define $\Delta:=\sum_{i=1}^n\alpha_i\left(\epsilon^{i\star}_\mathrm{insrc}+\epsilon^i_{\mu_\mathrm{src}}\right)$, $\epsilon^{i\star}_\mathrm{insrc}:=|J_{\mathcal{M}^i_\mathrm{src}}(\pi^\star_\mathrm{insrc}) -J_{\mathcal{M}^i_\mathrm{src}}(\pi^\star_\mathrm{src})|$, and $\epsilon^i_{\mu_\mathrm{src}}:=|J_{\mathcal{M}^i_\mathrm{src}}(\hat{\pi})-J_{\mathcal{M}^i_\mathrm{src}}(\mu^i_\mathrm{src})|$. Here $\pi^{i\star}_\mathrm{insrc}$ is the in-sample optimal policy on $\mathcal{D}_\mathrm{src}^i$, $\pi^{i\star}_\mathrm{src}$ is the optimal policy in $\mathcal{M}^i_\mathrm{src}$, $C$ is a constant, and $\{\alpha_i\}_{i=1}^n$ are the mixing coefficients for sub-datasets. 
\end{proposition}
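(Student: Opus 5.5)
The plan is to reduce the heterogeneous bound to a per-modality bound and then average with the mixing weights $\alpha_i$. Under the \emph{mild assumptions}, the relevant quantities decompose linearly over the sub-datasets of Definition~\ref{def:multi_modal_behavior}. Concretely, I will assume that the target optimal value can be compared against each source modality separately, so that
\begin{equation*}
\mathrm{SubOpt}(\hat{\pi})\le \textstyle\sum_{i=1}^n\alpha_i\,\big|J_{\mathcal{M}_\mathrm{tar}}(\hat{\pi})-J_{\mathcal{M}_\mathrm{tar}}(\pi^\star_\mathrm{tar})\big|.
\end{equation*}
This holds trivially because $\sum_i\alpha_i=1$. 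The work then reduces to bounding the single-modality gap $|J_{\mathcal{M}_\mathrm{tar}}(\hat{\pi})-J_{\mathcal{M}_\mathrm{tar}}(\pi^\star_\mathrm{tar})|$ in terms of quantities indexed by $i$.

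For fixed $i$, I will insert intermediate terms and apply the triangle inequality:
\begin{equation*}
\begin{aligned}
|J_{\mathcal{M}_\mathrm{tar}}(\hat{\pi})-J_{\mathcal{M}_\mathrm{tar}}(\pi^\star_\mathrm{tar})|
&\le |J_{\mathcal{M}_\mathrm{tar}}(\hat{\pi})-J_{\mathcal{M}^i_\mathrm{src}}(\hat{\pi})|
+|J_{\mathcal{M}^i_\mathrm{src}}(\hat{\pi})-J_{\mathcal{M}^i_\mathrm{src}}(\mu^i_\mathrm{src})|\\
&\quad+|J_{\mathcal{M}^i_\mathrm{src}}(\mu^i_\mathrm{src})-J_{\mathcal{M}^i_\mathrm{src}}(\pi^{i\star}_\mathrm{insrc})|
+|J_{\mathcal{M}^i_\mathrm{src}}(\pi^{i\star}_\mathrm{insrc})-J_{\mathcal{M}^i_\mathrm{src}}(\pi^{i\star}_\mathrm{src})|\\
&\quad+|J_{\mathcal{M}^i_\mathrm{src}}(\pi^{i\star}_\mathrm{src})-J_{\mathcal{M}_\mathrm{tar}}(\pi^\star_\mathrm{tar})|.
\end{aligned}
\end{equation*}
The second, third and fourth terms are exactly $\epsilon^i_{\mu_\mathrm{src}}$, the value misalignment, and $\epsilon^{i\star}_\mathrm{insrc}$; here I read $\pi^\star_\mathrm{src}$ in $\epsilon^{i\star}_\mathrm{insrc}$ as $\pi^{i\star}_\mathrm{src}$, and $\pi^\star_\mathrm{insrc}$ as $\pi^{i\star}_\mathrm{insrc}$.

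The first term is a pure dynamics-shift term for a fixed policy. I will bound it with the standard simulation lemma:
\begin{equation*}
|J_{\mathcal{M}_\mathrm{tar}}(\pi)-J_{\mathcal{M}^i_\mathrm{src}}(\pi)|\le \frac{2\gamma r_\mathrm{max}}{(1-\gamma)^2}\sup_{s,a}D_\mathrm{TV}\big(P^i_\mathrm{src}(\cdot|s,a),P_\mathrm{tar}(\cdot|s,a)\big).
\end{equation*}
This follows by writing the value difference via the telescoping identity $V^\pi_{\mathrm{tar}}-V^\pi_{\mathrm{src}}=\gamma(I-\gamma P^\pi_\mathrm{tar})^{-1}(P_\mathrm{tar}-P^i_\mathrm{src})V^\pi_\mathrm{src}$, together with $\|V\|_\infty\le r_\mathrm{max}/(1-\gamma)$.

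The last term compares the optimal values of two MDPs. I will handle it with the same uniform bound, since $\max_\pi J_{\mathcal{M}^i_\mathrm{src}}(\pi)-\max_\pi J_{\mathcal{M}_\mathrm{tar}}(\pi)$ is at most $\sup_\pi|J_{\mathcal{M}^i_\mathrm{src}}(\pi)-J_{\mathcal{M}_\mathrm{tar}}(\pi)|$, and the simulation lemma holds uniformly over $\pi$. The two dynamics terms together give $C=4\gamma r_\mathrm{max}/(1-\gamma)^2$. Multiplying by $\alpha_i$ and summing yields the stated bound, with $\Delta$ collecting the $\epsilon$ terms.

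The main obstacle is justifying the first step, namely how a policy trained on the mixed $\mathcal{D}_\mathrm{src}$ relates to the individual modalities. With the trivial convex-combination rewriting above, nothing needs justification, but the resulting bound is loose. The \emph{mild assumptions} I would state explicitly are:
\begin{itemize}
\item bounded rewards;
\item the in-sample optimal policies $\pi^{i\star}_\mathrm{insrc}$ are well defined on each $\mathcal{D}^i_\mathrm{src}$, meaning each sub-dataset has support for the relevant actions;
\item the optimal-value comparison uses the uniform simulation lemma.
\end{itemize}
If a tighter, non-trivial weighting were intended, I would instead define the mixture MDP with kernel $P_\mathrm{src}=\sum_i\alpha_iP^i_\mathrm{src}$ and bound $D_\mathrm{TV}(P_\mathrm{src},P_\mathrm{tar})\le\sum_i\alpha_iD_\mathrm{TV}(P^i_\mathrm{src},P_\mathrm{tar})$ by convexity. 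That route is harder, however, because the value terms do not decompose linearly in the kernel. So I would commit to the per-modality triangle-inequality argument above.
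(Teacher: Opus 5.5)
Your proof is correct, and it takes a genuinely different route from the paper's.

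The paper passes through the mixture MDP $\mathcal{M}_\mathrm{src}$ with kernel $P_\mathrm{src}=\sum_i\alpha_iP^i_\mathrm{src}$. It first applies the single-source bound $\mathrm{SubOpt}(\hat\pi)\le|J_{\mathcal{M}_\mathrm{src}}(\hat\pi)-J_{\mathcal{M}_\mathrm{src}}(\pi^\star_\mathrm{src})|+C\sup_{s,a}D_\mathrm{TV}(P_\mathrm{src}(\cdot|s,a),P_\mathrm{tar}(\cdot|s,a))$. It then splits the TV term by convexity, which is exactly your alternative. For the value term, the non-linearity you flagged, it \emph{assumes} that $F(P):=V^\star_P-V^{\hat\pi}_P$ satisfies a second-order sufficient condition at a minimizer. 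This makes $F$ locally convex, and the paper further assumes $P_\mathrm{src}$ and every $P^i_\mathrm{src}$ lie in that neighborhood. Jensen then gives $|J_{\mathcal{M}_\mathrm{src}}(\hat\pi)-J_{\mathcal{M}_\mathrm{src}}(\pi^\star_\mathrm{src})|\le\sum_i\alpha_i|J_{\mathcal{M}^i_\mathrm{src}}(\hat\pi)-J_{\mathcal{M}^i_\mathrm{src}}(\pi^{i\star}_\mathrm{src})|$. The same triangle inequality through $\mu^i_\mathrm{src}$ and $\pi^{i\star}_\mathrm{insrc}$ that you use finishes the proof.

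You instead telescope from the target directly into each $\mathcal{M}^i_\mathrm{src}$ and never form the mixture, so neither convexity step is needed. You get $|\mathrm{SubOpt}(\hat\pi)|\le B_i$ for \emph{every} $i$, where $B_i$ is the $i$-th bracket of the stated bound plus $\epsilon^{i\star}_\mathrm{insrc}+\epsilon^i_{\mu_\mathrm{src}}$. The $\alpha_i$-average is then just one admissible combination; even $\min_i B_i$ works. Your constant $4\gamma r_\mathrm{max}/(1-\gamma)^2$ is also no larger than the paper's $2(1+\gamma)r_\mathrm{max}/(1-\gamma)^2$. This is because your $\sup_\pi$ argument reuses the simulation lemma for the optimal-value gap, instead of a separate lemma with constant $2r_\mathrm{max}/(1-\gamma)^2$.

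So, contrary to your remark, your bound is not looser. It is the same inequality, valid for arbitrary $\hat\pi$, with no convexity hypothesis. Since it controls $|\mathrm{SubOpt}(\hat\pi)|$, it is also insensitive to the sign convention in the definition of $\mathrm{SubOpt}$.

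What the paper's route buys is mainly interpretive: it links the weights $\alpha_i$ to the mixture dynamics on which $\hat\pi$ is trained, and it passes through a mixture-level intermediate bound. Yours shows that, once $\epsilon^i_{\mu_\mathrm{src}}$ is absorbed into $\Delta$, the stated inequality is a chain of triangle inequalities plus the simulation lemma. Your reading of $\epsilon^{i\star}_\mathrm{insrc}$ with superscript $i$ on both policies agrees with the paper's own verbal definition, and it is also what the paper's final step requires.
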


This shows that $\mathrm{SubOpt}(\hat{\pi})$ can be bounded by a weighted sum of per-source-domain value and dynamics misalignment terms with weights $\alpha_i$ and an additional estimation error term $\Delta$, shown bounded
in Appendix~\ref{sec:proof}, that aggregates finite-sample and approximation errors. \emph{This proposition inspires us to increase the weights of sub-datasets with smaller value and dynamics misalignment terms to obtain a tighter sup-optimality gap.} In other words, we can prioritize samples from these sub-datasets during data filtering. According to performance difference lemma~\cite{kakade2002approximately}, the value misalignment term can be approximated by the average optimal advantage value: $\mathbb{E}_{(s,a)\sim\mathcal{D}^i_\mathrm{src}}[A^{i\star}_\mathrm{insrc}(s,a)]\approx J_{\mathcal{M}^i_\mathrm{src}}(\mu^i_\mathrm{src}) - J_{\mathcal{M}^i_\mathrm{src}}(\pi^{i\star}_{\mathrm{insrc}})$, where $A_\mathrm{insrc}^{i\star}(\cdot)$ denotes the in-sample optimal advantage function on $\mathcal{D}^i_\mathrm{src}$. Recall that DVDF pre-trains an advantage function on the whole source dataset to approximate $A^\star_\mathrm{insrc}(\cdot)$. Since $\mathbb{E}_{(s,a)\sim\mathcal{D}^i_\mathrm{src}}[A^\star_\mathrm{insrc}(s,a)]\approx J_{\mathcal{M}_\mathrm{src}}(\mu^i_\mathrm{src}) - J_{\mathcal{M}_\mathrm{src}}(\pi^{\star}_{\mathrm{insrc}})$ where $\pi^\star_\mathrm{insrc}$ denotes the in-sample optimal policy on $\mathcal{D}_\mathrm{src}$, DVDF implicitly measures the discrepancy between $\mu^i_\mathrm{src}$ and $\pi^\star_\mathrm{insrc}$ on $\mathcal{M}_\mathrm{src}$, inconsistent with the value misalignment term in Proposition~\ref{prop:multimodal}. We define this inconsistency as \textit{value misassignment}:
\begin{definition}[Value Misassignment]
\label{def:misassignment}
For the heterogeneous source dataset $\mathcal{D}_\mathrm{src}$, there exist some sub-datasets $\{\mathcal{D}_\mathrm{src}^j\}_{j=1}^m$ such that $\mathbb{E}_{(s,a)\sim\mathcal{D}^j_\mathrm{src}}[A^{j\star}_\mathrm{insrc}(s,a)]\neq \mathbb{E}_{(s,a)\sim\mathcal{D}^j_\mathrm{src}}[A^\star_\mathrm{insrc}(s,a)],\,j=1,2,...,m$.
    
\end{definition}

Intuitively, value misassignment implies assigning incorrect advantage values to source samples, leading to an inaccurate assessment of sample optimality. This may mislead data filtering, thus loosening the suboptimality gap. For example, consider a source sub-dataset $\mathcal{D}^i_\mathrm{src}$, whose behavior policy is the in-sample optimal policy: $\mu^i_\mathrm{src}=\pi^{i\star}_\mathrm{insrc}$. In this case, $\mathbb{E}_{(s,a)\sim\mathcal{D}^i_\mathrm{src}}\left[A^{i\star}_\mathrm{insrc}(s,a)\right]\approx0$. However, $\mu^i_\mathrm{src}$ might be suboptimal in $\mathcal{M}_\mathrm{src}$, i.e., $\mathbb{E}_{(s,a)\sim\mathcal{D}^i_\mathrm{src}}\left[A^{\star}_\mathrm{insrc}(s,a)\right]<0$. This underestimates the optimality of samples in $\mathcal{D}^i_\mathrm{src}$, causing them to be deprioritized during data filtering. Conversely, weights of suboptimal samples may increase, thereby loosening the suboptimality gap.

\vspace{-0.2cm}

\section{Methodology}

\vspace{-0.1cm}

In Section~\ref{sec:value_misassignment}, we have revealed the issue of value misassignment in heterogeneous cross-domain offline RL. In this section, we present our solution to this issue, V2A. It integrates dynamics alignment, value alignment, and value assignment into a unified framework, incorporating temporally-consistent modality representation learning, modality-aware advantage learning, and a data filtering paradigm to selectively share source data for policy learning.

\vspace{-0.1cm}

\subsection{Temporally-Consistent Modality Representation Learning}
\label{sec:temp_consis}
The first step of V2A is to extract different dynamics modalities from the source dataset. Typically, we can maximize the Evidence Lower Bound (ELBO) for the dynamics log-likelihood $\mathbb{E}_{\mathcal{D}_\mathrm{src}}\left[\log p_\theta(s^\prime|s,a)\right]$, which is defined as 
% \begin{equation}
% \label{eq:elbo}
%     \begin{aligned}
%         &\mathsf{ELBO}_{\psi, \theta}:=\mathbb{E}_{\mathcal{D}_\mathrm{src}}\left[\mathbb{E}_{z\sim q_\psi(\cdot|s,a,s^\prime)}\log p_\theta(s^\prime|s,a,z)\right.\\
%         &\qquad\qquad\quad\quad~~\,\,\,\left.-D_\mathrm{KL}(q_\psi(\cdot|s,a,s^\prime), p(\cdot))\right],
%     \end{aligned}
% \end{equation}
\begin{align}
        &\mathsf{ELBO}_{\psi, \theta}:=\mathbb{E}_{(s,a,s^\prime)\sim\mathcal{D}_\mathrm{src},z\sim q_\psi(\cdot|s,a,s^\prime)} \label{eq:elbo}\\
        &~\qquad\qquad \big[\log p_\theta(s^\prime|s,a,z) -D_\mathrm{KL}(q_\psi(\cdot|s,a,s^\prime), p(\cdot))\big], \nonumber 
\end{align}
where $q_\psi$, $p_\theta$ are the representation encoder and dynamics decoder, $z$ is the dynamics modality representation, and $p(z)$ denotes the prior distribution of $z$ setting to be $\mathcal{N}(0,I)$. By optimizing Equation~\ref{eq:elbo}, we learn a dynamics modality representation $z$ for each transition $(s,a,s^\prime)$ in $\mathcal{D}_\mathrm{src}$. However, we demonstrate that such an ELBO can introduce temporal inconsistency. Specifically, for transitions within a trajectory, they follow the same transition dynamics and should share identical dynamics representations. However, in Equation~\ref{eq:elbo}, $q_\psi$ encodes a unique representation $z$ for each transition $(s,a,s^\prime)$ even when they are sampled from the same trajectory, resulting in temporal inconsistency for the learned representation. To address this issue, we propose Temporally-Consistent ELBO, which encodes the dynamics modality representation at the trajectory level while decoding the transition dynamics at the step level.

\begin{proposition}[Temporally-Consistent ELBO]
\label{prop:tc_elbo}
Letting \(\mathcal{D}_\mathrm{src}\) be the source domain dataset of trajectories \(\tau = (s_0, a_0, s_1, \dots, s_T)\),
% \begin{itemize}
%     \item A trajectory encoder \(q_\psi(z \mid \tau)\) (variational posterior)
%     \item A transition decoder \(p_\theta(s_{t+1} \mid s_t, a_t, z)\) (dynamics model)
%     \item The trajectory generative process \(p_\theta(\tau \mid z) = \prod_{t=0}^{T-1} p_\theta(s_{t+1} \mid s_t, a_t, z)\)
% \end{itemize}
under the Markov property assumption for transition dynamics, the temporally-consistent ELBO for the dynamics log-likelihood \(\mathbb{E}_{\tau \sim \mathcal{D}_\mathrm{src}}[\log p_\theta(\tau)]\) is
\begin{align}
&\mathsf{TC\text{-}ELBO}_{\psi, \theta} 
:= \mathbb{E}_{\tau \sim \mathcal{D}_\mathrm{src},z \sim q_\psi(\cdot \mid \tau)} \label{eq:tc_elbo}\\
&~ \Big[  \textstyle \sum_{t=0}^{T-1} \log p_\theta(s_{t+1} \mid s_t, a_t, z) - {D_{\mathrm{KL}} \big( q_\psi(\cdot \mid \tau),  p(\cdot) \big)} 
\Big],\nonumber
\end{align}
% \begin{align}
%         \mathsf{TC\text{-}ELBO}_{\psi, \theta} 
% &:= \mathbb{E}_{\tau \sim \mathcal{D}_\mathrm{src}} \Bigg[ 
%     \sum_{t=0}^{T-1} \mathbb{E}_{z \sim q_\psi(\cdot \mid \tau)}  \log p_\theta(s_{t+1} \mid s_t, a_t, z) \nonumber\\
%     &\qquad\qquad\qquad- {D_{\mathrm{KL}} \big( q_\psi(\cdot \mid \tau) ,  p(\cdot) \big)} 
% \Bigg],\label{eq:tc_elbo}
% \end{align}
where $q_\psi$ is the trajectory-level representation encoder, $p_\theta$ is the transition-level dynamics decoder, $p(z)=\mathcal{N}(0,I)$ is the prior over trajectory dynamics representation $z$.
% This bound is tight when \(q_\psi(z \mid \tau) = p_\theta(z \mid \tau)\), and enforces temporal consistency by sharing a single \(z\) across all transitions in \(\tau\).
\end{proposition}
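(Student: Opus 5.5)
We establish that $\mathsf{TC\text{-}ELBO}_{\psi,\theta}$ lower-bounds $\mathbb{E}_{\tau\sim\mathcal{D}_\mathrm{src}}[\log p_\theta(\tau)]$ with the standard variational argument, applied with a trajectory-level latent. We first fix the generative model the statement implicitly uses: $z\sim p(z)=\mathcal{N}(0,I)$ is drawn once per trajectory, and, conditional on $z$ and $s_0$, the trajectory is generated step by step. The dynamics likelihood of $\tau$ is then
\begin{equation*}
p_\theta(\tau)=\int p(z)\,p_\theta(\tau\mid z)\,dz .
\end{equation*}
Here $p_\theta(\tau\mid z)$ collects only the transition factors; the initial-state and action factors $\rho(s_0)\prod_t\mu_\mathrm{src}(a_t\mid s_t)$ do not depend on $\theta$ or $z$, so they are either treated as conditioning variables or absorbed into a constant.

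The first key step factorizes $p_\theta(\tau\mid z)$. By the chain rule, $p_\theta(\tau\mid z)=\prod_{t=0}^{T-1}p_\theta(s_{t+1}\mid s_{0:t},a_{0:t},z)$. The Markov property of the dynamics within a fixed modality $z$ gives $p_\theta(s_{t+1}\mid s_{0:t},a_{0:t},z)=p_\theta(s_{t+1}\mid s_t,a_t,z)$. Taking logarithms,
\begin{equation*}
\log p_\theta(\tau\mid z)=\sum_{t=0}^{T-1}\log p_\theta(s_{t+1}\mid s_t,a_t,z).
\end{equation*}
This is the only place the Markov assumption enters. It is also the step that needs care: the Markov property holds only conditionally on the shared $z$. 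Marginally over $z$ the trajectory is not Markov, because past transitions carry information about the modality, and this is exactly why a single trajectory-level latent is used.

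The second step introduces the encoder $q_\psi(z\mid\tau)$, assumed absolutely continuous with respect to $p(z)$. For each fixed $\tau$ we write
\begin{equation*}
\log p_\theta(\tau)=\log\mathbb{E}_{z\sim q_\psi(\cdot\mid\tau)}\!\left[\frac{p(z)\,p_\theta(\tau\mid z)}{q_\psi(z\mid\tau)}\right]
\end{equation*}
and apply Jensen's inequality to the concave $\log$. This gives
\begin{equation*}
\log p_\theta(\tau)\ge\mathbb{E}_{q_\psi}[\log p_\theta(\tau\mid z)]-D_\mathrm{KL}(q_\psi(\cdot\mid\tau),p(\cdot)).
\end{equation*}
Equivalently, $\log p_\theta(\tau)$ equals this bound plus $D_\mathrm{KL}(q_\psi(\cdot\mid\tau),p_\theta(\cdot\mid\tau))\ge 0$, which also shows the bound is tight when $q_\psi(z\mid\tau)=p_\theta(z\mid\tau)$.

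The third step combines the two. We substitute the factorization from the first step into the bound, take the expectation over $\tau\sim\mathcal{D}_\mathrm{src}$ (which preserves the inequality by monotonicity), and use linearity. The KL term does not depend on $z$, so it may sit inside the joint expectation over $(\tau,z)$. This yields exactly Equation~\eqref{eq:tc_elbo}, so $\mathbb{E}_{\tau\sim\mathcal{D}_\mathrm{src}}[\log p_\theta(\tau)]\ge\mathsf{TC\text{-}ELBO}_{\psi,\theta}$.

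Finally, we compare with Equation~\eqref{eq:elbo}. Because every transition of $\tau$ shares the single sample $z\sim q_\psi(\cdot\mid\tau)$, the bound enforces temporal consistency by construction. The per-transition ELBO, by contrast, corresponds to a model with an independent latent for each step.
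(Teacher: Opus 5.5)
Your proposal is correct and follows essentially the same route as the paper's proof. Both apply Jensen's inequality to $\log\int p(z)\,p_\theta(\tau\mid z)\,dz$ with the trajectory-level encoder $q_\psi(z\mid\tau)$, split the result into $\mathbb{E}_{q_\psi}[\log p_\theta(\tau\mid z)]$ minus $D_\mathrm{KL}(q_\psi(\cdot\mid\tau),p(\cdot))$, factorize $p_\theta(\tau\mid z)$ by the Markov property given $z$, and average over $\mathcal{D}_\mathrm{src}$. Your explicit exclusion of the $z$- and $\theta$-independent initial-state and action factors from the dynamics likelihood is slightly cleaner than the paper's step of simply dropping $p(s_0)$ as a constant.
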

Equation~\ref{eq:tc_elbo} enforces temporal consistency by sharing a single \(z\) across all transitions in one trajectory \(\tau\). We next describe the construction of $q_\psi$ and $p_\theta$. The encoder $p_\psi$ needs to learn the dynamics modality representation of trajectories with variable lengths. Inspired by previous works~\cite{nagabandi2018learning,chen2021offline,chen2024policy} which utilize Recurrent Neural Network (RNN)~\cite{hochreiter1997long} as a representation extractor to map the input trajectories to some task-specific meta-parameters, we use a similar RNN structure to learn and infer the dynamics representation of trajectories. Formally, let $\tau=(s_0,a_0,s_1,...,s_T)$ be a trajectory sampled from the source dataset where $T$ denotes the trajectory horizon, we feed $\tau$ into a RNN $q_\psi(\tau)$ to output a latent representation distribution $z\sim\mathcal{N}(m(\tau),\sigma^2(\tau))$ at the final step. For $p_\theta$, we follow previous model-based RL literature~\cite{janner2019trust,yu2020mopo}, and build it as an ensemble fully-connected dynamics model. At each step $t$, transition $(s_t,a_t,s_{t+1})$ and the latent representation $z$ are fed into $p_\theta(s_{t+1}|s_t,a_t,z)$ to compute the likelihood. Then, $\mathsf{TC\text{-}ELBO}_{\psi, \theta}$ in Equation~\ref{eq:tc_elbo} is maximized by alternatively optimizing $q_\psi$ and $p_\theta$ via updating $\psi$ and $\theta$, in an Expectation-Maximization (EM) manner. Specifically, at the $k$-th iteration of the EM updates:

\textbf{E-Step.} The E-step fixes $p_{\theta^{(k)}}$ and optimizes $q_\psi$ via 
\begin{align}
&{\psi^{(k)}}=\arg\max_{\psi}\mathbb{E}_{\tau\sim\mathcal{D}_\mathrm{src},z\sim q_\psi(\cdot|\tau)} \label{eq:estep}\\
&~\Big[\textstyle\sum_{t=0}^{T-1}\log p_{\theta^{(k)}}(s_{t+1}|s_t,a_t,z)-D_\mathrm{KL}\left(q_{\psi}(\cdot|\tau), p(\cdot)\right)\Big], \nonumber
\end{align}
which intuitively encourages the inferred latent representations $z$ to capture and better explain the observed transition dynamics for each trajectory. 

\textbf{M-Step.} The M-step fixes $q_{\psi^{(k)}}$ and optimizes $p_\theta$ via
\begin{align}
\theta^{(k+1)}&=\arg\max_{\theta}\mathbb{E}_{\tau\sim\mathcal{D}_\mathrm{src},z\sim q_{\psi^{(k)}}(\cdot|\tau)}\label{eq:mstep}\\
        &\qquad\qquad\quad \Big[\textstyle\sum_{t=0}^{T-1}\log p_\theta(s_{t+1}|s_t,a_t,z)\Big],\nonumber
\end{align}
where the term $-D_\mathrm{KL}\left(q_{\psi^{(k)}}(\cdot|\tau), p(\cdot)\right)$ is dropped as $q_{\psi^{(k)}}$ is fixed. In essence, the M-step is supervised learning on trajectories with the inferred latent representations $z$.

\subsection{Modality-Aware Advantage Learning for Unifying Value Alignment and Assignment}

By iterating through the EM steps until convergence, we obtain the dynamics modality representation $z$ for all the trajectories in the source dataset. In this section, we leverage the learned representation to tackle the value misassignment issue in heterogeneous cross-domain offline RL.

According to the analysis in Section~\ref{sec:value_misassignment}, the primary cause of the value misassignment issue in DVDF is that, it pre-trains an advantage function $A(s,a)$ directly on $\mathcal{D}_\mathrm{src}$, which consequently renders $A(s,a)$ unable to distinguish between different dynamics $\{P^i_\mathrm{src}\}_{i=1}^n$. To address this, we first relabel $\mathcal{D}_\mathrm{src}$ to incorporate the learned dynamics representation $z$ into the transitions. This yields a relabeled source dataset $\mathcal{D}^\prime_\mathrm{src}=\{(s,a,r,s^\prime,z)\}$, where $z\sim p(\tau)$ is the inferred dynamic representation for each trajectory. Then we propose modality-aware advantage learning. We incorporate the learned representation $z$ into the advantage pretraining process, which enables the advantage function to distinguish between different dynamics modalities and to properly assign advantage values. Specifically, we take $z$ as part of the input of $Q$-function and $V$-function, and update $Q$ and $V$ in the Sparse-QL scheme as follows:

\vspace{-0.6cm}
\small
\begin{align}
\mathcal{L}_V&=\mathbb{E}_{(s,a,z)\sim\mathcal{D}^\prime_\mathrm{src}}\bigg[\mathbb{I}\left(1+\frac{Q(s,a,z)-V(s,z)}{2\alpha}>0\right)\cdot\label{eq:modal_v}\\
        &\qquad\qquad\left(1+\frac{Q(s,a,z)-V(s,z)}{2\alpha}\right)^2+\frac{V(s,z)}{\alpha}\bigg], \nonumber\\ 
        \mathcal{L}_Q&=\mathbb{E}_{(s,a,r,s^\prime,z)\sim\mathcal{D}^\prime_\mathrm{src}}\left[\left(r+\gamma V(s^\prime,z)-Q(s,a,z)\right)^2\right], \label{eq:modal_q}
\end{align}
\normalsize
\vspace{-0.6cm}

where $\alpha$ is the regularization coefficient. Since we only need to obtain the advantage function, we do not involve the policy learning process. The modality-aware advantage function is derived as $A(s,a,z)=Q(s,a,z)-V(s,z)$. Compared with the typical advantage function $A(s,a)$ as adopted in DVDF, $A(s,a,z)$ offers more modality-related information with $z$. Intuitively, $A(s,a,z)$ assigns different advantage values to state-action pairs $(s,a)$ with varying dynamics modalities, which unifies value alignment and value assignment. To enable dynamics alignment, we define a score function $f(s,a,s^\prime,z)$ similar to $g(s,a,s^\prime)$ in DVDF:

\vspace{-0.6cm}
\small
\begin{equation}
    \label{eq:score_function}f(s,a,s^\prime,z)=\lambda\cdot h(s,a,s^\prime)+(1-\lambda)\cdot \mathrm{Norm}(A(s,a,z)),
\end{equation}
\normalsize
\vspace{-0.6cm}

where $h(s,a,s^\prime)$ measures the dynamics alignment of the transition $(s,a,s^\prime)$ with the target dynamics and is learned with either IGDF or OTDF, $\lambda$ is the trade-off hyperparameter, and $\mathrm{Norm}(\cdot)$ is the normalization operator. $f(s,a,s^\prime,z)$ enables modality awareness with $z$, thus unifying dynamics alignment, value alignment, and value assignment. In contrast, $g(s,a,s^\prime)$ only considers value alignment and dynamics alignment, and overlooks the value misassignment issue. Then we adopt a data filtering paradigm following previous works~\cite{wen2024contrastive,lyu2025cross, qiao2025cross} and select the top $\xi$-quantile of batch source samples for training:

\vspace{-0.6cm}
\small
\begin{align}
    &\mathcal{L}_Q(\phi)=\frac{1}{2}\mathbb{E}_{(s,a,s^\prime)\sim\mathcal{D}_\mathrm{tar}}\left[(Q_\phi-\mathcal{T}Q_\phi)^2\right]
    + \label{eq:filter_q} \\
    &\frac{1}{2}\mathbb{E}_{(s,a,s^\prime,z)\sim\mathcal{D}^\prime_\mathrm{src}}\left[w(s,a,s^\prime,z) f(s,a,s^\prime,z)(Q_\phi-\mathcal{T}Q_\phi)^2\right], \nonumber
\end{align}
\normalsize
\vspace{-0.6cm}

where $w(s,a,s^\prime,z)=\mathbb{I}(f(s,a,s^\prime,z)>f_{\xi\%})$ is an indicator function, and $f_{\xi\%}$ means the $\xi$-th quantile of the $f$-values among source samples in a batch. Then we update the policy with IQL~\cite{kostrikov2021offline}. We present the pseudo-code of V2A in Appendix~\ref{sec:imple_v2a}.

% \begin{algorithm}[tb]
%   \caption{V2A (Abstracted Version)}
%   \label{alg:V2A}
%   \begin{algorithmic}[1]
%     \STATE {\bfseries Input:} Source domain dataset $\mathcal{D}_\mathrm{src}$, target domain dataset $\mathcal{D}_\mathrm{tar}$, trade-off coefficient $\lambda$.

%     \STATE {\bfseries Initialize:} Policy network $\pi_\eta$, $Q$-network $Q_\phi$.
%     \STATE Extract modality representation of source samples by iteratively optimizing Equation~\ref{eq:estep} and Equation~\ref{eq:mstep}.
%     \STATE Relabel $\mathcal{D}_\mathrm{src}$ to incorporate the learned representations.
%     \STATE Perform modality-aware advantage learning by optimizing Equation~\ref{eq:modal_v} and Equation~\ref{eq:modal_q}.
%     \STATE Compute $h(s,a,s^\prime)$ with contrastive representation learning or optimal transport.
%     \STATE Compute score function $f(s,a,s^\prime,z)$ via Equation~\ref{eq:score_function}.
%     \STATE Train the value function via Equation~\ref{eq:filter_q}.
%     \STATE Train the policy via IQL.
%   \end{algorithmic}
% \end{algorithm}

\section{Experiments}

In this section, we empirically examine the effectiveness of V2A and study its sensitivity to major hyperparameters. We aim to answer the following questions: \textbf{(a)} Can V2A outperform prior strong baselines with heterogeneous source datasets under various dynamics shifts and dataset qualities? \textbf{(b)} Does V2A learn reasonable modality representations? 

\subsection{Main Results with Heterogeneous Datasets}
\label{sec:main_results}
\begin{table*}[t]
    \centering
    \begingroup
    \setlength{\tabcolsep}{4.5pt}
    \caption{\textbf{Performance comparison between V2A and other baselines.} half=halfcheetah, m=medium, me=medium-expert, mr=medium-replay, e=expert. We report the normalized score evaluated in the target domain, and $\pm$ captures the standard deviation across 5 seeds. For each task, we \textbf{bold} and highlight the best scores in \textcolor{cyan}{cyan}.}
    \label{tab:main}
    \begin{tabular}{ll|ccc|ccc|ccc}
    \toprule
    \textbf{Source} & \textbf{Target} & $\text{IQL}$ & BOSA & DARA & IGDF & DVDF & V2A & OTDF & DVDF & V2A\\
    \midrule
    half-mr & m & 27.9 & 14.3 & 20.2 & 24.6 & \cellcolor{cyan!10}\textbf{28.3$\pm$1.8} & 26.8$\pm$2.4  & 27.3 & 24.9$\pm$1.5 & \cellcolor{cyan!10}\textbf{35.6$\pm$1.9} \\
    half-mr & me & 31.2 & 26.9  & 33.5 & 37.2 & 44.5$\pm$1.1 & \cellcolor{cyan!10}\textbf{51.4$\pm$1.7} & 41.8 & 46.3$\pm$3.8 & \cellcolor{cyan!10}\textbf{57.3$\pm$2.4} \\
    half-mr & e & 22.4 & 17.8 & 28.7 & \cellcolor{cyan!10}\textbf{28.2} & 25.0$\pm$1.2 & 26.4$\pm$0.5 & 20.6 & \cellcolor{cyan!10}\textbf{35.2$\pm$1.8} & 30.7$\pm$2.3 \\
    half-me & m & 38.7 & 41.4 & 45.2 & 42.6 & 52.9$\pm$ 3.3 & \cellcolor{cyan!10}\textbf{60.7$\pm$5.8} & 45.1 & 38.1$\pm$3.7 & \cellcolor{cyan!10}\textbf{49.4$\pm$4.2} \\
    half-me & me & 40.7 & 36.1 & 44.8 & 32.4 & 48.3$\pm$2.0 & \cellcolor{cyan!10}\textbf{51.7$\pm$3.4} & 35.7 & 46.2$\pm$2.4 & \cellcolor{cyan!10}\textbf{49.5$\pm$2.8} \\
    half-me & e & 51.6 & 44.7 & 49.2 & 53.7 & 46.9$\pm$2.1 & \cellcolor{cyan!10}\textbf{62.7$\pm$4.8} & 42.6 & 55.7$\pm$4.5 & \cellcolor{cyan!10}\textbf{73.8$\pm$6.7} \\
    hopper-mr & m & 53.1 & 47.6 & 48.8 & 45.3 & 50.6$\pm$1.7 & \cellcolor{cyan!10}\textbf{58.8$\pm$2.2} & 56.1 & 48.5$\pm$2.4 & \cellcolor{cyan!10}\textbf{58.3$\pm$1.8} \\
    hopper-mr & me & 28.5 & 46.7 & \cellcolor{cyan!10}\textbf{58.9} & 39.5 & 51.2$\pm$2.5 & \cellcolor{cyan!10}\textbf{56.8$\pm$3.9} & 27.6 & 43.1$\pm$3.4 & \cellcolor{cyan!10}\textbf{52.7$\pm$5.6} \\
    hopper-mr & e & \cellcolor{cyan!10}\textbf{66.3} & 24.7 & 42.1 & 53.7 & 46.8$\pm$3.1 & \cellcolor{cyan!10}\textbf{61.7$\pm$4.8} & 62.7 & \cellcolor{cyan!10}\textbf{71.3$\pm$2.6} & 64.6$\pm$5.4 \\
    hopper-me & m & 62.5 & 40.4 & 55.2 & 46.3 & 52.8$\pm$4.1  & \cellcolor{cyan!10}\textbf{56.4$\pm$2.3} & 54.4 & 58.2$\pm$3.7 & \cellcolor{cyan!10}\textbf{65.7$\pm$2.5} \\
    hopper-me & me & 21.3 & 54.9 & 37.3 & 65.4 & 66.8$\pm$4.2 & \cellcolor{cyan!10}\textbf{76.5$\pm$2.8} & 48.3 & 55.4$\pm$3.8 & \cellcolor{cyan!10}\textbf{74.3$\pm$4.6} \\
    hopper-me & e & 45.1 & 58.8 & 56.3 & 58.6 & 55.3$\pm$4.4 & \cellcolor{cyan!10}\textbf{64.8$\pm$3.6} & 67.7 & 48.5$\pm$6.2 & \cellcolor{cyan!10}\textbf{72.4$\pm$3.5} \\
    walker2d-mr & m & 14.6 & 8.9 & 18.3 & 11.6 & 23.4$\pm$1.0 & \cellcolor{cyan!10}\textbf{42.7$\pm$5.1}& 31.4 & 29.6$\pm$1.5 & \cellcolor{cyan!10}\textbf{35.1$\pm$4.6}\\
    walker2d-mr & me & 32.9 & 21.4 & 31.6 & \cellcolor{cyan!10}\textbf{35.2} & 30.0$\pm$1.4 & 31.4$\pm$2.7 & 26.9 & 34.9$\pm$2.3 & \cellcolor{cyan!10}\textbf{40.6$\pm$1.4} \\
    walker2d-mr & e & 37.1 & 17.8 & 42.3 & 34.3 & 36.2$\pm$2.7 & \cellcolor{cyan!10}\textbf{52.3$\pm$1.9} & 43.0 & 49.8$\pm$3.0 & \cellcolor{cyan!10}\textbf{63.1$\pm$7.5} \\
    walker2d-me & m & 75.3 & 43.6 & 62.8 & 72.5 & 73.3$\pm$4.2 & \cellcolor{cyan!10}\textbf{84.6$\pm$2.3} & 64.9 & 72.6$\pm$4.5 & \cellcolor{cyan!10}\textbf{78.2$\pm$1.7} \\
    walker2d-me & me & 59.7 & 62.4 & 65.5 & 64.9 & 71.7$\pm$2.3 & \cellcolor{cyan!10}\textbf{76.6$\pm$2.7} & 70.2 & 73.4$\pm$1.6 & \cellcolor{cyan!10}\textbf{82.6$\pm$2.9} \\
    walker2d-me & e & 65.8 & 51.7 & 67.4 & 73.6 & 67.1$\pm$2.8 & \cellcolor{cyan!10}\textbf{79.4$\pm$5.6} & 57.6 & 52.7$\pm$1.9 & \cellcolor{cyan!10}\textbf{71.3$\pm$4.3} \\
    ant-mr & m & 46.2 & 31.5 & 43.7 & 37.4 & 48.6$\pm$0.6 & \cellcolor{cyan!10}\textbf{54.3$\pm$1.0} & 43.1 & 48.5$\pm$1.2 & \cellcolor{cyan!10}\textbf{56.6$\pm$3.7} \\
    ant-mr & me & 62.3 & 54.6 & 72.4 & 64.4 & 73.5$\pm$6.7 & \cellcolor{cyan!10}\textbf{81.7$\pm$4.5} & 68.2 & 74.4$\pm$3.6 & \cellcolor{cyan!10}\textbf{85.6$\pm$6.9} \\
    ant-mr & e & 58.3 & 76.2 & 70.8 & \cellcolor{cyan!10}\textbf{73.1} & 65.5$\pm$3.0 & 70.2$\pm$2.3 & \cellcolor{cyan!10}\textbf{86.5} & 74.3$\pm$4.6 & 78.7$\pm$3.1 \\
    ant-me & m & \cellcolor{cyan!10}\textbf{98.3} & 84.4 & 80.6 & 78.2 & 88.7$\pm$4.8 & \cellcolor{cyan!10}\textbf{96.3$\pm$2.5} & 83.7 & 87.4$\pm$3.8 & \cellcolor{cyan!10}\textbf{93.3$\pm$2.0} \\
    ant-me & me & 103.5 & 107.9 & 97.6 & 101.3 & 105.5$\pm$1.2 & \cellcolor{cyan!10}\textbf{114.0$\pm$1.8} & 105.7 & 112.1$\pm$1.0 & \cellcolor{cyan!10}\textbf{117.3$\pm$2.5} \\
    ant-me & e & 106.4 & 113.1 & 108.2 & 112.7 & 121.8$\pm$0.5 & \cellcolor{cyan!10}\textbf{124.3$\pm$0.2} & 108.4 & 114.8$\pm$0.4 & \cellcolor{cyan!10}\textbf{126.2$\pm$0.8} \\
    \midrule
    \multicolumn{2}{c|}{\textbf{Total Score}} & 1249.7& 1127.8 & 1281.4 & 1286.7 & 1374.7 & \cellcolor{cyan!10}\textbf{1562.5} & 1319.5 & 1395.9 & \cellcolor{cyan!10}\textbf{1612.9} \\
    \bottomrule
    \end{tabular}
    \endgroup

\end{table*}

\textbf{Settings.} We leverage four tasks (\texttt{halfcheetah}, \texttt{hopper}, \texttt{walker2d}, \texttt{ant}) from OpenAI Gym~\cite{brockman2016openai} as target domains. Since only limited target domain data can be accessed, we sample $10\%$ data from D4RL~\cite{fu2020d4rl} MuJoCo datasets with various data qualities (\texttt{medium}, \texttt{medium-expert}, \texttt{expert}) as the target domain datasets. For the source domain with dynamics shifts, following previous works~\cite{lyu2025cross,qiao2025cross}, we consider two shift types: \textit{kinematic shifts} and \textit{morphology shifts}. For each shift type, we further introduce two shift levels: \textit{easy} and \textit{medium}. Accordingly, for each target domain, the corresponding source domains encompass four distinct dynamics ($2[\mathrm{types}]\times2[\mathrm{levels}]$). We present more details in Appendix~\ref{sec:environ_setting}. For the source domain datasets, we consider heterogeneous datasets with multi-modal behavior policy and dynamics. Specifically, following a similar data collection procedure as D4RL, we collect \texttt{medium-replay} and \texttt{medium-expert} datasets in all source domains. For each task, we mix datasets of the same data quality collected from the four source domains in equal proportions (each for $25\%$). Consequently, each source domain dataset exhibits a multi-modal behavior policy (\texttt{medium-replay} or \texttt{medium-expert}) and transition dynamics (kinematic and morphology shifts with different shift levels). The source domain datasets contain nearly 1M samples, much more than that of target domain datasets.

\textbf{Baselines.} We compare our method V2A against six baselines: IQL~\cite{kostrikov2021offline}, BOSA~\cite{liu2024beyond}, DARA~\cite{liu2022dara}, IGDF~\cite{wen2024contrastive}, OTDF~\cite{lyu2025cross}, DVDF~\cite{qiao2025cross}. Since DVDF and V2A are algorithmic frameworks rather than standalone algorithms, we combine them with IGDF and OTDF respectively for performance comparison.

\textbf{Results.} We run all methods for 1M gradient steps across 5 random seeds, and present the performance comparison results of V2A against baselines in Table~\ref{tab:main}. We report the normalized score performance in the target domain. 

% \underline{\textit{Answer to Question (a):}} 
The empirical results clearly indicate that V2A brings more significant performance improvement on base algorithms (IGDF and OTDF) than DVDF with heterogeneous source datasets, while clearly surpassing other baselines (IQL, BOSA, DARA) across various tasks and dataset qualities. Notably, IGDF-based V2A outperforms IGDF and DVDF in \textbf{20} out of 24 tasks, and OTDF-based V2A achieves a higher score than OTDF and DVDF in \textbf{21} out of 24 tasks. In terms of the total score, V2A achieves performance improvements of \textbf{21.4\%} (from 1286.7 to 1562.5) for IGDF and \textbf{22.2\%} (from 1319.5 to 1612.9) for OTDF, outperforming DVDF that delivers improvements of only 6.8\% (from 1286.7 to 1374.7) and 5.5\% (from 1319.5 to 1395.9) for IGDF and OTDF. 
We attribute this discrepancy to the impact of value misassignment, which causes DVDF to select suboptimal samples, a key issue that V2A is designed to address.

\begin{figure*}[t]
    \centering
    \subfigure[]{\includegraphics[width=.49\linewidth]{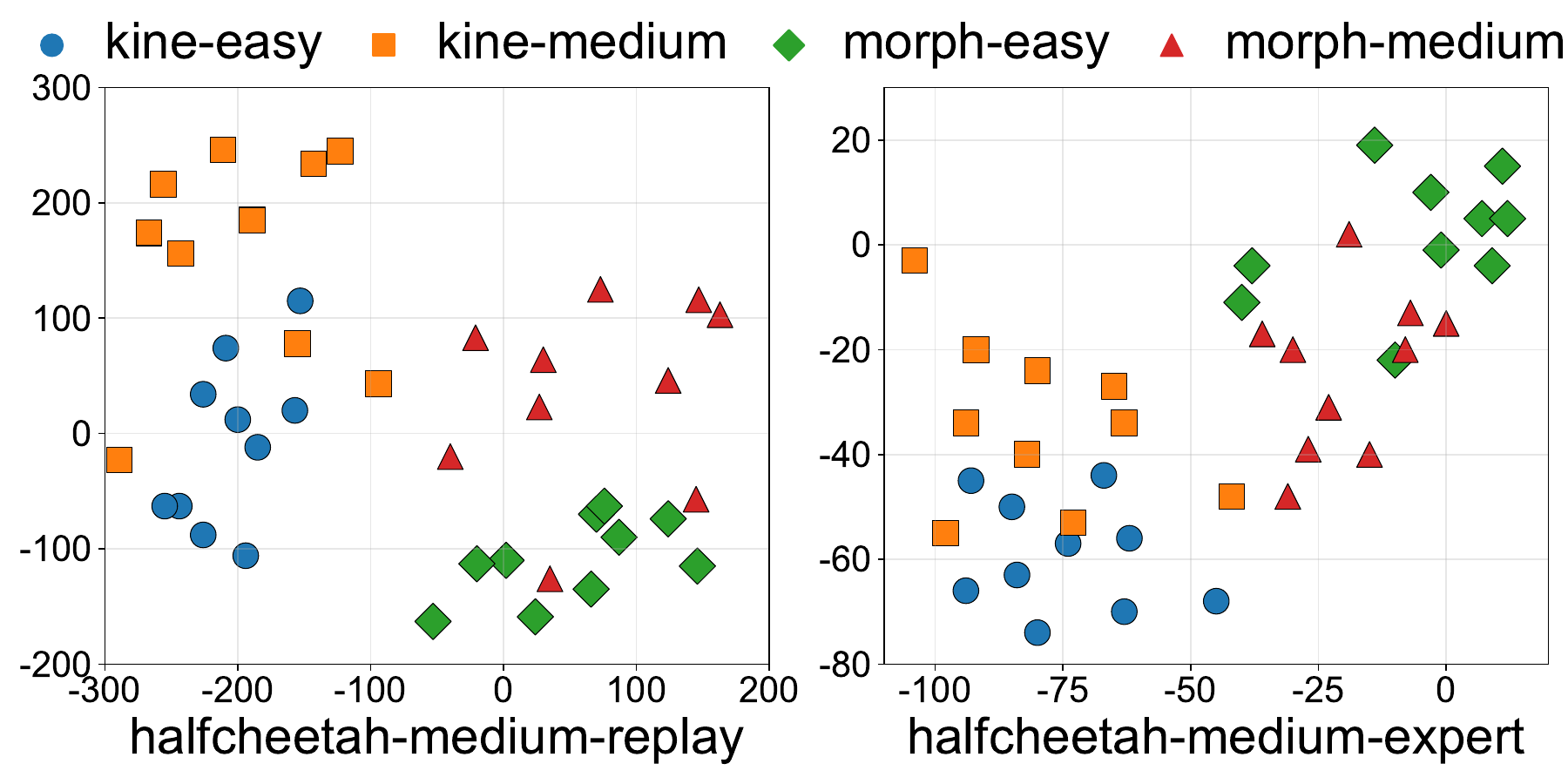}}
    \subfigure[]{\includegraphics[width=.49\linewidth]{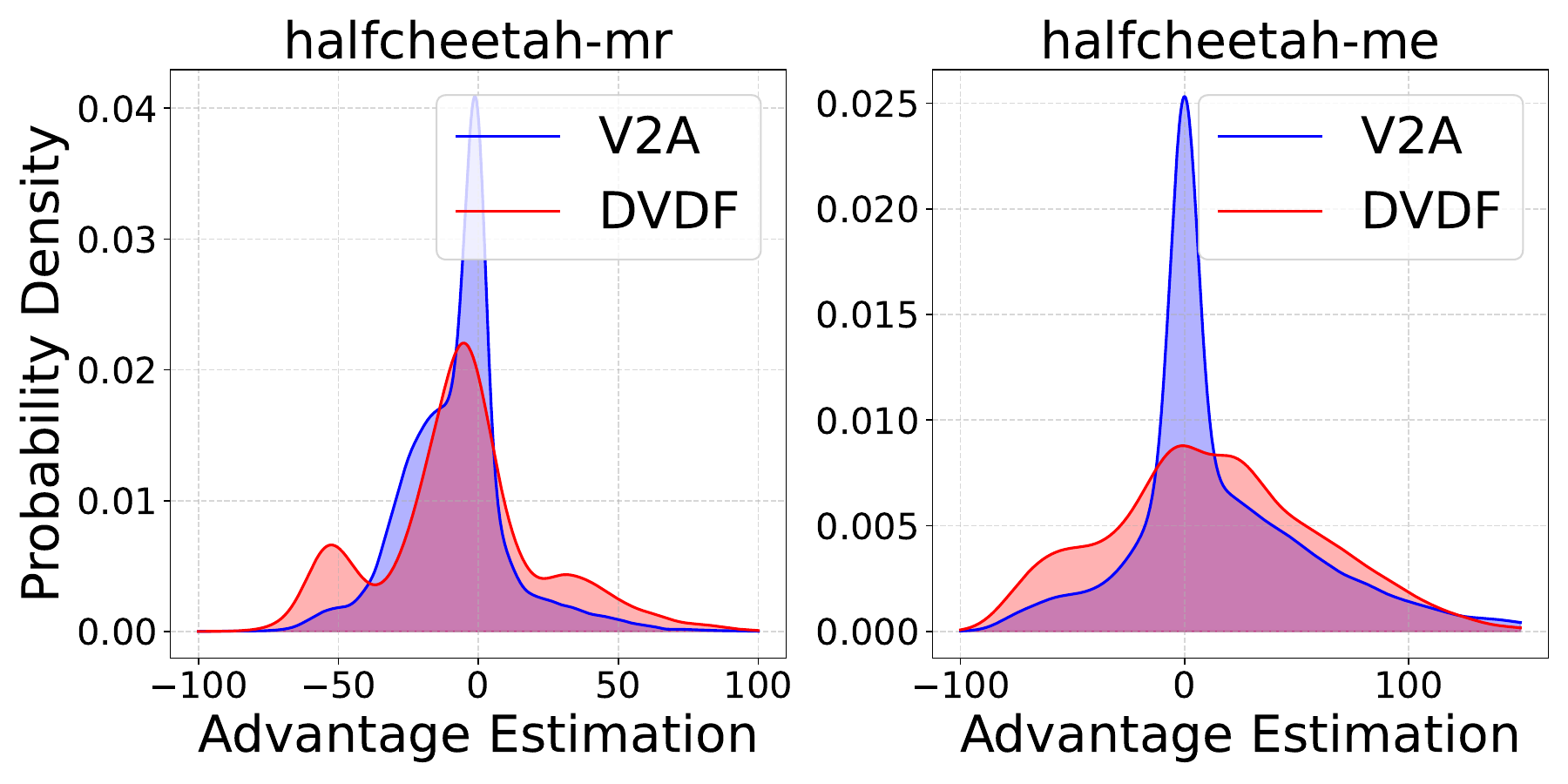}}
    \caption{\textbf{(a)} Visualization of the learned modality representation. \textbf{(b)} Comparison of advantage distribution for V2A and DVDF.}
    \label{fig:analysis}
\end{figure*}

\begin{figure*}[t]
    \centering
	\subfigure[Effect of $\lambda$] {\includegraphics[width=.49\linewidth]{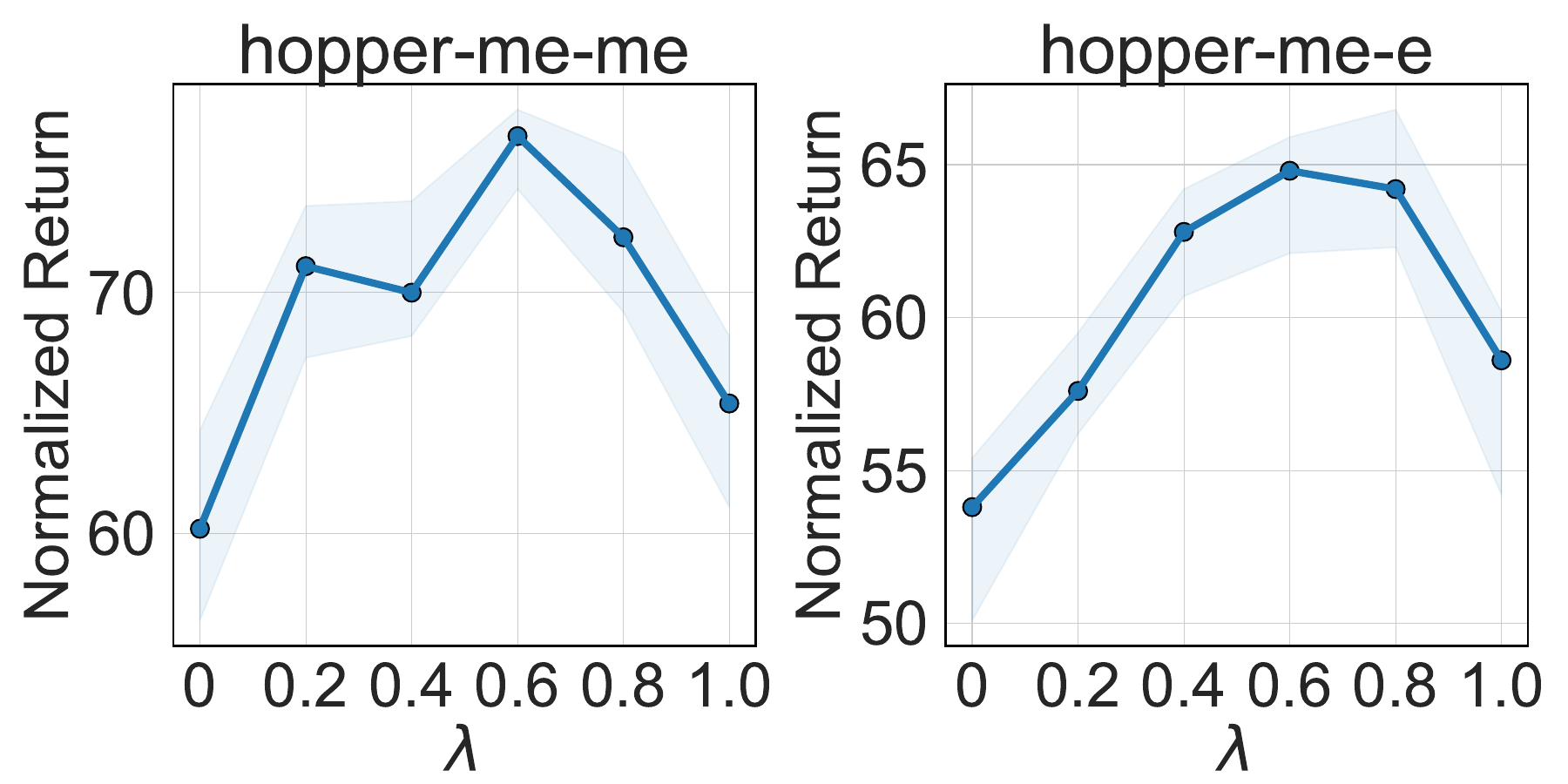}}
     % \label{fig:parameterstudynumberoftraj}
	\subfigure[Effect of $\xi$] {\includegraphics[width=.49\linewidth]{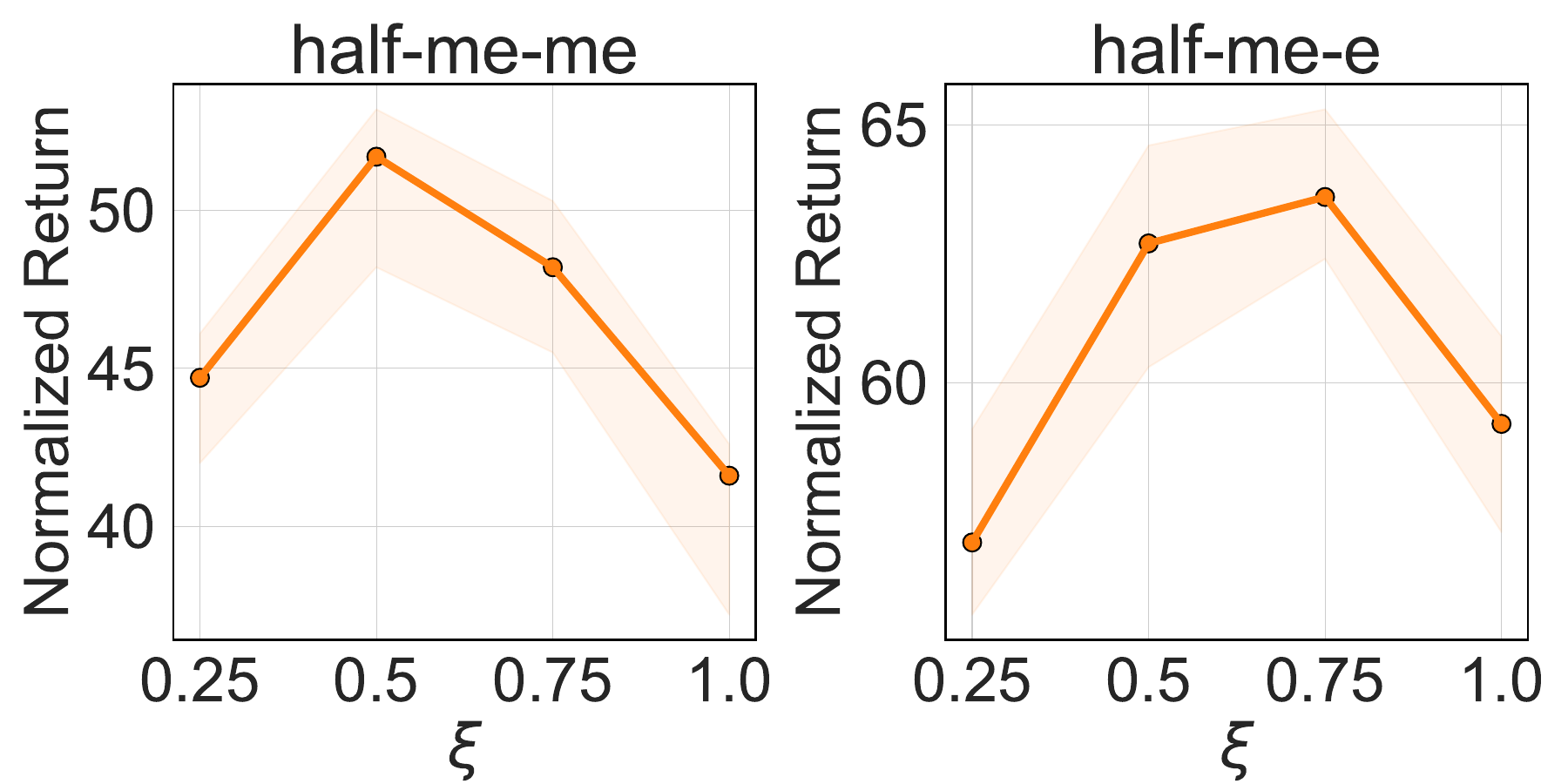}
     % \label{fig:parameterstudyalpha}
     }
    \caption{Parameter study on $\lambda$ and $\xi$. ``me-e" means that the source dataset is medium-expert, the target dataset is expert, and so on.}
    
    \vspace{-0.35cm}
    \label{fig:parameter}
\end{figure*}

\subsection{Analysis of Learned Modality Representation}

To answer question (b), we conduct an empirical study to verify whether the modality representations learned by V2A are reasonable. Ideally, trajectories sampled from the same source domain should be encoded into the same representation space. We select the \textit{halfcheetah-medium-replay} and \textit{halfcheetah-medium-expert} datasets as source domain datasets, following the experimental setup in Section~\ref{sec:main_results}. Each dataset comprises four distinct dynamics, covering two shift types (\textit{kinematic} and \textit{morphology}) and two shift levels (\textit{easy} and \textit{medium}). Following the procedure outlined in Section~\ref{sec:temp_consis}, we train a representation encoder $q_\psi$ on each source domain dataset and learn a latent representation $z$ for each trajectory. For each type of dynamics in the dataset, we randomly sample ten trajectories and visualize their latent representations $z$ using t-SNE. The resulting visualization is shown in Figure~\ref{fig:analysis} (a). We find that the representations of trajectories sampled from the same source domain or source domains with the same shift type are closer to each other, while the representations of trajectories from source domains with different shift types are farther apart. This indicates that the representations learned with $q_\psi$ are able to distinguish different dynamics modalities. 

We further investigate the impact of $z$ on advantage learning. Specifically, we perform advantage learning with V2A and DVDF on the same source datasets and plot the probability density distributions of the advantage values in Figure~
\ref{fig:analysis} (b). We observe that the advantage distribution learned by V2A is sharper, which aligns with our expectation: since the data collected under different dynamics share the same data quality (\textit{medium-expert} or \textit{medium-replay}), they should exhibit similar optimality. In contrast, DVDF yields a flatter advantage distribution, indicating that it misestimates the optimality for certain source samples.

% \subsection{Ablation Study}

% \textbf{Temporal consistency.}

% \textbf{RNN architecture.}

\subsection{Parameter Study}
\label{sec:parameterstudy}

In this section, we explore the impact of the main hyperparameters: the trade-off coefficient $\lambda$ and the data selection ratio $\xi$ on the performance of V2A. Each experiment is run with 1M gradient steps across 5 seeds.

\textbf{Trade-off coefficient $\lambda$.} A larger $\lambda$ emphasizes the importance of dynamics alignment, while a smaller $\lambda$ favors selecting higher-quality samples. Following the setup in Section~\ref{sec:main_results}, we use the \textit{hopper-me} dataset with dynamics shifts as the source dataset, and select \textit{hopper-me} and \textit{hopper-e} as the target datasets, respectively. We vary $\lambda$ across $\{0, 0.2, 0.4, 0.6, 0.8, 1.0\}$ and present the experimental results in Figure~\ref{fig:parameter} (a). We observe that performance degrades when $\lambda$ is either too small or too large, whereas a moderate value such as $0.6$ yields satisfactory performance. Similar to DVDF, we fix $\lambda$ to a constant (i.e., $\lambda=0.6$) in our experiments, without tuning it for each dataset individually.

\textbf{Data selection ratio $\xi$.} A larger $\xi$ selects more source data for training, and vice versa. To investigate the effect of $\xi$, we use the \textit{halfcheetah-me} dataset with dynamics shifts as the source dataset, and select \textit{halfcheetah-me} and \textit{halfcheetah-e} as the target datasets, respectively. We vary $\xi$ across $\{0.25, 0.5, 0.75, 1.0\}$ and present the experimental results in Figure~\ref{fig:parameter} (b). We find that setting $\xi$ to $0.5$ or $0.75$ yields good performance, and we simply fix $\xi$ to $0.5$ in all experiments.

\vspace{-0.2cm}

\section{Conclusion}

\vspace{-0.15cm}

In this work, we systematically investigate a novel and general heterogeneous cross-domain offline RL setting. We identify a critical value misassignment issue in this setting. Both empirically and theoretically, we demonstrate that value misassignment can mislead data filtering toward selecting suboptimal samples, thus hindering effective policy learning. To address this, we propose V2A, which performs temporally-consistent modality representation learning and modality-aware advantage learning, as well as a data filtering paradigm to unify value alignment and value assignment. Empirical results on various heterogeneous cross-domain offline RL tasks show that V2A delivers significant performance improvements over baseline algorithms. The main limitations of our work include: (1) additional training time is required for the temporally-consistent modality representation learning process; (2) the theoretical analysis relies on some mild assumptions.

\section*{Acknowledgments}

This research was supported by the Research Grants Council of Hong Kong, China (GRF 11217925,
GRF 16209124) and the National Natural Science Foundation of China (Grant 72371214). The authors would also like to thank the anonymous reviewers for their valuable comments.

\section*{Impact Statement}

This paper presents work whose goal is to advance the field of Machine
Learning. There are many potential societal consequences of our work, none of
which we feel must be specifically highlighted here.

% In the unusual situation where you want a paper to appear in the
% references without citing it in the main text, use \nocite
\nocite{langley00}

\bibliography{example_paper}
\bibliographystyle{icml2026}

%%%%%%%%%%%%%%%%%%%%%%%%%%%%%%%%%%%%%%%%%%%%%%%%%%%%%%%%%%%%%%%%%%%%%%%%%%%%%%%
%%%%%%%%%%%%%%%%%%%%%%%%%%%%%%%%%%%%%%%%%%%%%%%%%%%%%%%%%%%%%%%%%%%%%%%%%%%%%%%
% APPENDIX
%%%%%%%%%%%%%%%%%%%%%%%%%%%%%%%%%%%%%%%%%%%%%%%%%%%%%%%%%%%%%%%%%%%%%%%%%%%%%%%
%%%%%%%%%%%%%%%%%%%%%%%%%%%%%%%%%%%%%%%%%%%%%%%%%%%%%%%%%%%%%%%%%%%%%%%%%%%%%%%
\newpage
\appendix
\onecolumn

\section{Related Work}

\textbf{Offline RL.} In offline RL, only a static dataset is accessible and no online interactions are allowed. Therefore, typical off-policy RL algorithms could suffer from the extrapolation
error and exhibit poor performance~\citep{kumar2020conservative, fujimoto2019off}. Common solutions for this challenge include incorporating policy constraints~\citep{kumar2019stabilizing, fujimoto2021minimalist}, learning a conservative value function~\citep{kumar2020conservative, lyu2022mildly,jin2021is,zhang2024pessimism}, leveraging dynamics models to facilitate policy learning~\citep{yu2020mopo, yu2021combo, qiao2025sumo,qiao2024mind,qiao2026model}, performing in-sample learning~\citep{kostrikov2021offline,xu2023offline,garg2023extreme,zhang2023sample}, etc. There is another line of work using decision transformer~\citep{chen2021decision,wu2023elastic,xu2025tackling} for offline sequential decision making. However, these methods require that the offline dataset contains a large amount of data. In contrast, we focus on cross-domain offline RL, where the target data is limited.

\textbf{Cross-domain RL.} Cross-domain RL~\cite{niu2024comprehensive} aims to generalize or transfer policies across varied domains with distinct agent embodiment~\cite{liu2022revolver,zhang2020learning}, observation space~\cite{gamrian2019transfer,bousmalis2018using,ge2023policy,zhang2021policy,hansen2020self}, viewpoints~\cite{liu2018imitation,sadeghi2018sim2real}, transition dynamics~\cite{viano2021robust,liu2022dara}, and so on. In this work, we focus on cross-domain RL under dynamics shifts. Previous studies address this challenge by modifying the reward function of the source domain~\cite{liu2022dara,xue2023state,eysenbach2020off,wang2024return,yan2026cross}, adaptively penalizing the $Q$-function on source domain samples~\cite{niu2022trust,qiao2025dual}, measuring the dynamics mismatch via representation learning~\cite{lyu2024cross}, value discrepancy~\cite{xu2023cross}, flow matching~\cite{kong2025composite}, and so on. In this work, we explore the offline setting where the source and target domains are both offline. Existing works address this issue by conducting dynamics-aware data filtering from the perspective of mutual information~\cite{wen2024contrastive}, standard optimal transport~\cite{lyu2025cross}, unbalanced optimal transport~\cite{chen2026transport}, and augmenting the source dataset with dynamics-aligned data~\cite{guo2025mobody,van2025dmc}, etc. These works solely focus on dynamics alignment, while DVDF~\cite{qiao2025cross} highlights the significance of value alignment to select high-quality source data. In this work, we further investigate cross-domain offline RL with a heterogeneous source dataset, wherein value alignment can be undermined and value misassignment might occur.

\textbf{Multi-modality in Offline RL.} Offline RL often faces the challenge of multi-modal behavior policy in the offline dataset, which can lead the learned policy to converge toward a suboptimal mode. To address this challenge,  PLAS~\cite{zhou2021plas} and LAPO~\cite{chen2022lapo} learn to decode actions from a VAE latent space. Additionally,~\citet{yang2022behavior} and~\citet{wang2023goplan} leverage GANs~\cite{goodfellow2020generative} to model multiple action modes; ~\citet{wang2022diffusion} and~\citet{akimov2022let} utilize strong generative models such as normalizing flows and diffusion models to model multi-modal distributions. Other related studies include DMPO~\cite{osa2023offline} that learns a mixture of deterministic policies, TD3+RKL~\cite{cai2022td3} that narrows action coverage by exploiting the mode-seeking property of reverse KL-divergence, and LOM~\cite{wang2024learning} that performs weighted imitation learning on a single promising mode, and so on. Our work is orthogonal to these studies, as we focus on the cross-domain offline RL setting. Moreover, we investigate a novel setting where both the behavior policy and the transition dynamics in the source dataset exhibit multi-modal characteristics.

\section{Proofs of Propositions}
\label{sec:proof}

In this part, we formally present the missing proofs for our theoretical results in the main text. We also list some useful lemmas for our proofs in Appendix~\ref{app:useful_lemmas}. 

\subsection{Proof of Proposition~\ref{prop:multimodal}}

We first present Lemma~\ref{lemma:sub_unimodal} below, which gives the suboptimality gap on the source domain with unimodal dynamics.

\begin{lemma}[suboptimality gap on source domain with unimodal dynamics]
\label{lemma:sub_unimodal}
Let $\mathcal{M}_\mathrm{src}$ and $\mathcal{M}_\mathrm{tar}$ denote the MDPs of the source domain and target domain, where $\mathcal{M}_\mathrm{src}$ has unimodal dynamics. For a policy $\hat{\pi}$ trained on $\mathcal{M}_\mathrm{src}$, the suboptimality gap of $\hat{\pi}$ on $\mathcal{M}_\mathrm{tar}$ can be bounded as follows,
\begin{equation}
\label{eq:app_subopt}
    \begin{aligned}
    \mathrm{SubOpt}(\hat{\pi})\leq|J_{\mathcal{M}_\mathrm{src}}(\hat{\pi})-J_{\mathcal{M}_\mathrm{src}}(\pi^\star_\mathrm{src})|
    +C\cdot\sup_{s,a}\left[D_{\mathrm{TV}}(P_\mathrm{src}(\cdot|s,a),P_\mathrm{tar}(\cdot|s,a))\right],
    \end{aligned}
\end{equation}
where $\pi^\star_\mathrm{src}$ is the optimal policy in $\mathcal{M}_\mathrm{src}$, and $C$ is a positive constant.
\end{lemma}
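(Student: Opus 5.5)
The plan is to bound the absolute gap $|J_{\mathcal{M}_\mathrm{tar}}(\pi^\star_\mathrm{tar})-J_{\mathcal{M}_\mathrm{tar}}(\hat{\pi})|$, which is the meaningful quantity. With the sign convention $\mathrm{SubOpt}(\hat{\pi})=J_{\mathcal{M}_\mathrm{tar}}(\hat{\pi})-J_{\mathcal{M}_\mathrm{tar}}(\pi^\star_\mathrm{tar})\le 0$, the stated inequality is implied by this stronger bound. I would insert source-domain returns via a four-term telescoping decomposition:
\begin{equation*}
\begin{aligned}
J_{\mathcal{M}_\mathrm{tar}}(\pi^\star_\mathrm{tar})-J_{\mathcal{M}_\mathrm{tar}}(\hat{\pi})
&=\big[J_{\mathcal{M}_\mathrm{tar}}(\pi^\star_\mathrm{tar})-J_{\mathcal{M}_\mathrm{src}}(\pi^\star_\mathrm{tar})\big]
+\big[J_{\mathcal{M}_\mathrm{src}}(\pi^\star_\mathrm{tar})-J_{\mathcal{M}_\mathrm{src}}(\pi^\star_\mathrm{src})\big]\\
&\quad+\big[J_{\mathcal{M}_\mathrm{src}}(\pi^\star_\mathrm{src})-J_{\mathcal{M}_\mathrm{src}}(\hat{\pi})\big]
+\big[J_{\mathcal{M}_\mathrm{src}}(\hat{\pi})-J_{\mathcal{M}_\mathrm{tar}}(\hat{\pi})\big].
\end{aligned}
\end{equation*}
I would handle the four terms as follows.
\begin{itemize}
\item The second term is nonpositive, because $\pi^\star_\mathrm{src}$ is optimal in $\mathcal{M}_\mathrm{src}$.
\item The third term is at most $|J_{\mathcal{M}_\mathrm{src}}(\hat{\pi})-J_{\mathcal{M}_\mathrm{src}}(\pi^\star_\mathrm{src})|$, which is exactly the first term of \eqref{eq:app_subopt}.
\item The first and fourth terms each compare one fixed policy across two MDPs that share $\mathcal{S},\mathcal{A},r,\rho,\gamma$ and differ only in dynamics. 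I would bound both with a simulation-lemma argument.
\end{itemize}

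The key step is the simulation lemma: for any policy $\pi$,
\begin{equation*}
|J_{\mathcal{M}_\mathrm{src}}(\pi)-J_{\mathcal{M}_\mathrm{tar}}(\pi)|\le \frac{2\gamma r_\mathrm{max}}{(1-\gamma)^2}\sup_{s,a}D_{\mathrm{TV}}\big(P_\mathrm{src}(\cdot|s,a),P_\mathrm{tar}(\cdot|s,a)\big).
\end{equation*}
I would prove it by writing the Bellman equations $Q^\pi_{\mathcal{M}}=r+\gamma P V^\pi_{\mathcal{M}}$ for both MDPs and subtracting. Adding and subtracting $\gamma P_\mathrm{tar}V^\pi_{\mathcal{M}_\mathrm{src}}$ gives
\begin{equation*}
Q^\pi_{\mathcal{M}_\mathrm{src}}-Q^\pi_{\mathcal{M}_\mathrm{tar}}=\gamma(P_\mathrm{src}-P_\mathrm{tar})V^\pi_{\mathcal{M}_\mathrm{src}}+\gamma P_\mathrm{tar}\big(V^\pi_{\mathcal{M}_\mathrm{src}}-V^\pi_{\mathcal{M}_\mathrm{tar}}\big).
\end{equation*}
Since $\|V^\pi\|_\infty\le r_\mathrm{max}/(1-\gamma)$, the first term is at most $2\gamma\, r_\mathrm{max}/(1-\gamma)\cdot\sup D_{\mathrm{TV}}$ in sup-norm. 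Taking sup-norms and unrolling the $\gamma$-contraction of the second term gives the extra factor $1/(1-\gamma)$. Averaging over $\rho$ then yields the same bound on the returns $J$.

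Summing the pieces gives \eqref{eq:app_subopt} with $C=4\gamma r_\mathrm{max}/(1-\gamma)^2$, which is positive. The only delicate point is the sign convention of $\mathrm{SubOpt}$. I would state explicitly that the argument bounds the absolute value, so the lemma holds under either sign. I would also note that unimodality of $P_\mathrm{src}$ plays no role beyond making $P_\mathrm{src}$ a single well-defined kernel. This point matters later, when the lemma is applied to each $\mathcal{M}^i_\mathrm{src}$ and the results are combined with weights $\alpha_i$ in Proposition~\ref{prop:multimodal}.
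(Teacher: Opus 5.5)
Your proof is correct, but it takes a somewhat different route from the paper's.

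\textbf{The paper's route.} It uses a three-term triangle-inequality decomposition,
\[
|J_{\mathcal{M}_\mathrm{tar}}(\hat\pi)-J_{\mathcal{M}_\mathrm{tar}}(\pi^\star_\mathrm{tar})|\le|J_{\mathcal{M}_\mathrm{src}}(\hat\pi)-J_{\mathcal{M}_\mathrm{src}}(\pi^\star_\mathrm{src})|+|J_{\mathcal{M}_\mathrm{tar}}(\hat\pi)-J_{\mathcal{M}_\mathrm{src}}(\hat\pi)|+|J_{\mathcal{M}_\mathrm{tar}}(\pi^\star_\mathrm{tar})-J_{\mathcal{M}_\mathrm{src}}(\pi^\star_\mathrm{src})|.
\]
The middle term is handled by the simulation lemma (Lemma~\ref{lemma:useful_bound}). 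The last term compares the two \emph{optimal} values, and the paper bounds it with a separate optimal-value perturbation bound (Lemma~\ref{lemma:useful_optimalbound}, constant $2r_\mathrm{max}/(1-\gamma)^2$). Both lemmas are imported from DVDF.

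\textbf{Your route.} You telescope through $J_{\mathcal{M}_\mathrm{src}}(\pi^\star_\mathrm{tar})$ and use the optimality of $\pi^\star_\mathrm{src}$ in $\mathcal{M}_\mathrm{src}$ to discard $J_{\mathcal{M}_\mathrm{src}}(\pi^\star_\mathrm{tar})-J_{\mathcal{M}_\mathrm{src}}(\pi^\star_\mathrm{src})\le 0$. As a result you only ever compare a \emph{fixed} policy ($\pi^\star_\mathrm{tar}$ or $\hat\pi$) across the two MDPs. That needs only the simulation lemma, which you prove directly.

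\textbf{What each buys.} Your argument is self-contained and avoids the optimal-value lemma. It also gives a slightly smaller constant, $C=4\gamma r_\mathrm{max}/(1-\gamma)^2$, against the paper's $2(1+\gamma)r_\mathrm{max}/(1-\gamma)^2$. The price is that your decomposition is one-sided: it relies on the sign of $J_{\mathcal{M}_\mathrm{tar}}(\pi^\star_\mathrm{tar})-J_{\mathcal{M}_\mathrm{tar}}(\hat\pi)$. The paper's symmetric triangle inequality bounds the absolute difference directly. Since that quantity is nonnegative, the price is nil here.

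Your remark on the sign convention is apt. Under the paper's literal definition $\mathrm{SubOpt}(\hat\pi)\le 0$, so the stated inequality is trivial, and both proofs really establish the bound on $|\mathrm{SubOpt}(\hat\pi)|$.
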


\begin{proof}
    Equation~\ref{eq:app_subopt} can be obtained by slight modifications to Proposition 4.1 in~\citet{qiao2025cross}. Specifically, the suboptimality gap can be decomposed as follows,
    \begin{equation*}
        \begin{aligned}
            \mathrm{SubOpt}(\hat\pi)&:=J_{\mathcal{M}_\mathrm{tar}}(\hat\pi)-J_{\mathcal{M}_\mathrm{tar}}(\pi^\star_\mathrm{tar})\\
            &\leq \underbrace{|J_{\mathcal{M}_\mathrm{src}}(\hat\pi)-J_{\mathcal{M}_\mathrm{src}}(\pi^\star_\mathrm{src})|}_{(\text{I})}+\underbrace{|J_{\mathcal{M}_\mathrm{tar}}(\hat\pi)-J_{\mathcal{M}_\mathrm{src}}(\hat\pi)|}_{(\text{II})}+\underbrace{|J_{\mathcal{M}_\mathrm{tar}}(\pi^\star_{\mathrm{src}})-J_{\mathcal{M}_\mathrm{src}}(\pi^\star_\mathrm{tar})|}_{(\text{III})}.
        \end{aligned}
    \end{equation*}
According to Lemma~\ref{lemma:useful_bound}, term (\text{II}) can be bounded by:
\begin{equation*}
\begin{aligned}
    (\text{II})&:= |J_{\mathcal{M}_\mathrm{tar}}(\hat\pi)-J_{\mathcal{M}_\mathrm{src}}(\hat\pi)|\\
    &\leq \frac{2\gamma r_\mathrm{max}}{(1-\gamma)^2}\cdot\sup_{s,a}[D_{\mathrm{TV}}(P_\mathrm{src}(\cdot|s,a),P_\mathrm{tar}(\cdot|s,a))].
\end{aligned}
\end{equation*}

According to Lemma~\ref{lemma:useful_optimalbound}, term (\text{III}) can be bounded by:
\begin{equation*}
\begin{aligned}
    (\text{III})&:= |J_{\mathcal{M}_\mathrm{tar}}(\pi^\star_\mathrm{tar})-J_{\mathcal{M}_\mathrm{src}}(\pi^\star_\mathrm{src})|\\
    &\leq \frac{2 r_\mathrm{max}}{(1-\gamma)^2}\cdot\sup_{s,a}[D_{\mathrm{TV}}(P_\mathrm{src}(\cdot|s,a),P_\mathrm{tar}(\cdot|s,a))].
\end{aligned}
\end{equation*}
We can obtain the desired bound by adding terms (\text{II}) and (\text{III}). Then we conclude the proof.
\end{proof}

Then we restate and prove our Proposition~\ref{prop:multimodal} below.

\begin{proposition}[Proposition~\ref{prop:multimodal}]Denote the MDP of the target domain and multiple source domains as $\mathcal{M}_\mathrm{tar}$ and $\{\mathcal{M}^i_\mathrm{src}\}_{i=1}^n$. For a policy $\hat{\pi}$ trained on $\mathcal{D}_\mathrm{src}$ defined in Definition~\ref{def:multi_modal_behavior}, under some mild assumptions, the suboptimality gap on $\mathcal{M}_\mathrm{tar}$ can be bounded as
\begin{equation*}
% \label{eq:sub_gap_hetero_main}
    \begin{aligned}
        \mathrm{SubOpt}(\hat{\pi})\leq \sum_{i=1}^n \alpha_i \cdot \Big( 
            \underbrace{|J_{\mathcal{M}^i_\mathrm{src}}(\mu^i_\mathrm{src}) - J_{\mathcal{M}^i_\mathrm{src}}(\pi^{i\star}_{\mathrm{insrc}})|}_{\mathrm{value\,misalignment}}+ C \cdot
        \underbrace{\sup_{s,a} \left[ D_{\mathrm{TV}}(P^i_\mathrm{src}(\cdot|s,a), P_\mathrm{tar}(\cdot|s,a)) \right]}_{\mathrm{dynamics\,misalignment}} \Big)+\Delta,
    \end{aligned}
\end{equation*}
where $\Delta:=\sum_{i=1}^n\alpha_i\cdot\left(\epsilon^{i\star}_\mathrm{insrc}+\epsilon^i_{\mu_\mathrm{src}}\right)$, $\epsilon^{i\star}_\mathrm{insrc}:=|J_{\mathcal{M}^i_\mathrm{src}}(\pi^\star_\mathrm{insrc})-J_{\mathcal{M}^i_\mathrm{src}}(\pi^\star_\mathrm{src})|$, $\epsilon^i_{\mu_\mathrm{src}}:=|J_{\mathcal{M}^i_\mathrm{src}}(\hat{\pi})-J_{\mathcal{M}^i_\mathrm{src}}(\mu^i_\mathrm{src})|$, $\pi^{i\star}_\mathrm{insrc}$ is the in-sample optimal policy on $\mathcal{D}_\mathrm{src}^i$, $\pi^{i\star}_\mathrm{src}$ is the optimal policy in $\mathcal{M}^i_\mathrm{src}$, $C$ is a constant, and $\{\alpha_i\}_{i=1}^n$ are the mixing coefficients for sub-datasets. 
\end{proposition}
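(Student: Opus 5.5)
The plan is to reduce the heterogeneous case to the unimodal case of Lemma~\ref{lemma:sub_unimodal} by working sub-dataset by sub-dataset, and then to average with the mixing weights $\alpha_i$. Since $\sum_i\alpha_i=1$, I first write
\[
\mathrm{SubOpt}(\hat\pi)=\sum_{i=1}^n\alpha_i\,\mathrm{SubOpt}(\hat\pi).
\]
For each $i$ I then apply Lemma~\ref{lemma:sub_unimodal} with $\mathcal{M}_\mathrm{src}$ replaced by the unimodal source MDP $\mathcal{M}^i_\mathrm{src}$, whose dynamics are unimodal by Definition~\ref{def:multi_modal_behavior}. This gives
\[
\mathrm{SubOpt}(\hat\pi)\le |J_{\mathcal{M}^i_\mathrm{src}}(\hat\pi)-J_{\mathcal{M}^i_\mathrm{src}}(\pi^{i\star}_\mathrm{src})| + C\sup_{s,a}D_{\mathrm{TV}}\big(P^i_\mathrm{src}(\cdot|s,a),P_\mathrm{tar}(\cdot|s,a)\big),
\]
with $C=\frac{2(1+\gamma) r_\mathrm{max}}{(1-\gamma)^2}$ independent of $i$. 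The constant is uniform in $i$ because Lemma~\ref{lemma:useful_bound} and Lemma~\ref{lemma:useful_optimalbound} only involve $r_\mathrm{max}$ and $\gamma$. The "mild assumption" used here is that $\hat\pi$, a single policy trained on the pooled data, can be evaluated in every $\mathcal{M}^i_\mathrm{src}$.

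Next I decompose the first term by the triangle inequality through $\mu^i_\mathrm{src}$ and $\pi^{i\star}_\mathrm{insrc}$:
\[
|J_{\mathcal{M}^i_\mathrm{src}}(\hat\pi)-J_{\mathcal{M}^i_\mathrm{src}}(\pi^{i\star}_\mathrm{src})|\le |J_{\mathcal{M}^i_\mathrm{src}}(\hat\pi)-J_{\mathcal{M}^i_\mathrm{src}}(\mu^i_\mathrm{src})| + |J_{\mathcal{M}^i_\mathrm{src}}(\mu^i_\mathrm{src})-J_{\mathcal{M}^i_\mathrm{src}}(\pi^{i\star}_\mathrm{insrc})| + |J_{\mathcal{M}^i_\mathrm{src}}(\pi^{i\star}_\mathrm{insrc})-J_{\mathcal{M}^i_\mathrm{src}}(\pi^{i\star}_\mathrm{src})|.
\]
The first piece is $\epsilon^i_{\mu_\mathrm{src}}$ and the middle piece is the value misalignment term. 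The last piece is the in-sample-optimality error $\epsilon^{i\star}_\mathrm{insrc}$; I read $\pi^\star_\mathrm{insrc}$ and $\pi^\star_\mathrm{src}$ in the definition of $\epsilon^{i\star}_\mathrm{insrc}$ as the $i$-indexed policies. Multiplying the resulting per-$i$ bound by $\alpha_i$ and summing over $i$ yields exactly the stated bound, with $\Delta$ collecting the $\epsilon$ terms.

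Finally, to match the remark that $\Delta$ is "shown bounded", I would argue separately that each $\epsilon$ is controlled. The term $\epsilon^{i\star}_\mathrm{insrc}$ is small under a coverage (concentrability) assumption on $\mathcal{D}^i_\mathrm{src}$, since then the in-sample optimum approaches the true optimum in $\mathcal{M}^i_\mathrm{src}$. The term $\epsilon^i_{\mu_\mathrm{src}}$ is small if $\hat\pi$ stays close to the behavior policy, which holds for in-sample methods such as IQL. In that case the performance difference lemma, or a simulation-style bound, gives $\epsilon^i_{\mu_\mathrm{src}}\le \frac{2r_\mathrm{max}}{(1-\gamma)^2}\,\mathbb{E}\big[D_{\mathrm{TV}}(\hat\pi(\cdot|s),\mu^i_\mathrm{src}(\cdot|s))\big]$.

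I expect this last step to be the main obstacle: bounding $\Delta$ meaningfully requires assumptions that are only informally described as mild. The core inequality itself is just the triangle inequality combined with Lemma~\ref{lemma:sub_unimodal}. The one subtlety there is the uniformity of $C$ across $i$, which I would check explicitly from the two auxiliary lemmas.
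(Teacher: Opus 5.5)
Your argument is correct, but it takes a different route from the paper's.

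\textbf{The paper's route.} The paper applies Lemma~\ref{lemma:sub_unimodal} once, to the pooled MDP $\mathcal{M}_\mathrm{src}$ with $P_\mathrm{src}=\sum_i\alpha_i P^i_\mathrm{src}$. It then pushes both resulting terms down to the components:
\begin{itemize}
\item the dynamics term, by convexity of total variation, $D_{\mathrm{TV}}(\sum_i\alpha_i P^i_\mathrm{src},P_\mathrm{tar})\le\sum_i\alpha_i D_{\mathrm{TV}}(P^i_\mathrm{src},P_\mathrm{tar})$;
\item the value term, $|J_{\mathcal{M}_\mathrm{src}}(\hat\pi)-J_{\mathcal{M}_\mathrm{src}}(\pi^\star_\mathrm{src})|\le\sum_i\alpha_i|J_{\mathcal{M}^i_\mathrm{src}}(\hat\pi)-J_{\mathcal{M}^i_\mathrm{src}}(\pi^{i\star}_\mathrm{src})|$, by a local-convexity argument. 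There $F(P)=V^\star_P-V^{\hat\pi}_P$ is assumed to satisfy a second-order sufficient condition at a minimizer $P_0$, hence to be convex on a neighbourhood. The proof further assumes $P_0$, $P_\mathrm{src}$ and every $P^i_\mathrm{src}$ lie in that neighbourhood, so that Jensen applies.
\end{itemize}
Its final triangle inequality through $\mu^i_\mathrm{src}$ and $\pi^{i\star}_\mathrm{insrc}$ is the same as yours.

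\textbf{Your route.} You apply the lemma to each $\mathcal{M}^i_\mathrm{src}$ separately and average using $\sum_i\alpha_i=1$. This bypasses both the TV-convexity step and the whole SOSC/Jensen step. That is legitimate: the proof of Lemma~\ref{lemma:sub_unimodal} is only a triangle inequality plus Lemmas~\ref{lemma:useful_bound} and~\ref{lemma:useful_optimalbound}. It holds for any policy and any pair of MDPs differing only in dynamics. The constant $C=2(1+\gamma)r_\mathrm{max}/(1-\gamma)^2$ is independent of $i$, as you note.

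\textbf{What each route buys.} Your route needs none of the ``mild assumptions'' for the displayed inequality; they enter only in the separate remark that $\Delta$ is bounded. It also proves more. Since the per-$i$ bound holds for every $i$, the right-hand side can be replaced by its minimum over $i$, or by any convex combination, not just the $\alpha$-weighted one. For a fixed $\hat\pi$, the mixing coefficients therefore play no essential role; the dependence on the training data sits in $\epsilon^i_{\mu_\mathrm{src}}$. The paper's route does yield an intermediate inequality about the pooled MDP on which $\hat\pi$ is trained, which is what motivates reading $\alpha_i$ as data weights. It pays for this with the assumption that all component dynamics lie in a small neighbourhood where $F$ is convex, which sits uneasily with genuinely heterogeneous dynamics.

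Finally, your reading of $\epsilon^{i\star}_\mathrm{insrc}$ with $i$-indexed policies is exactly what the paper's own triangle-inequality step requires.
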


\begin{proof}
We first extend the results of Lemma~\ref{lemma:sub_unimodal} to the heterogeneous setting. Specifically, denote the MDP of the mixed source domain and each component as $\mathcal{M}_\mathrm{src}=(\mathcal{S},\mathcal{A},P_\mathrm{src},r,\rho,\gamma)$ and $\{\mathcal{M}_\mathrm{src}^i=(\mathcal{S},\mathcal{A},P^i_\mathrm{src},r,\rho,\gamma)\}_{i=1}^n$, respectively, where
\begin{equation*}
    P_\mathrm{src}(\cdot|s,a)=\sum_{i=1}^n\alpha_i\cdot P^i_\mathrm{src}(\cdot|s,a).
\end{equation*}
Then according to Lemma~\ref{lemma:sub_unimodal}, the suboptimality gap can be bounded by:
\begin{equation}
    \label{eq:app_multi_subopt}\mathrm{SubOpt}(\hat{\pi})\leq\underbrace{|J_{\mathcal{M}_\mathrm{src}}(\hat\pi)-J_{\mathcal{M}_\mathrm{src}}(\pi^\star_\mathrm{src})|}_{(\text{I})}+C\cdot\underbrace{\sup_{s,a}[D_{\mathrm{TV}}(P_\mathrm{src}(\cdot|s,a),P_\mathrm{tar}(\cdot|s,a))]}_{(\text{II})}.
\end{equation}
We first analyze term (\text{II}),
\begin{equation}
\label{eq:app_dynamics_alignment}
    \begin{aligned}
        D_{\mathrm{TV}}(P_\mathrm{src}(\cdot|s,a),P_\mathrm{tar}(\cdot|s,a))&=D_{\mathrm{TV}}\left(\sum_{i=1}^n\alpha_i\cdot P^i_\mathrm{src}(\cdot|s,a),P_\mathrm{tar}(\cdot|s,a)\right)\\
        &=\frac{1}{2}\int_{s^\prime}\left|\sum_{i=1}^n\alpha_i\cdot P^i_\mathrm{src}(\cdot|s,a)-P_\mathrm{tar}(\cdot|s,a)\right|\\
        &=\frac{1}{2}\int_{s^\prime}\left|\sum_{i=1}^n\alpha_i\cdot \left(P^i_\mathrm{src}(\cdot|s,a)-P_\mathrm{tar}(\cdot|s,a)\right)\right|\\
        &\leq \frac{1}{2} \sum_{i=1}^n\alpha_i\int_{s^\prime}\left|P^i_\mathrm{src}(\cdot|s,a)-P_\mathrm{tar}(\cdot|s,a)\right|\\
        &=\sum_{i=1}^n\alpha_i\cdot D_{\mathrm{TV}}(P^i_\mathrm{src}(\cdot|s,a),P_\mathrm{tar}(\cdot|s,a)).
    \end{aligned}
\end{equation}
Therefore, term (\text{II}) can be further bounded as
\begin{equation*}
    \begin{aligned}
        (\text{II})&:=\sup_{s,a}[D_{\mathrm{TV}}(P_\mathrm{src}(\cdot|s,a),P_\mathrm{tar}(\cdot|s,a))]\\
        &\leq\sup_{s,a}\left[\sum_{i=1}^n\alpha_i\cdot D_{\mathrm{TV}}(P^i_\mathrm{src}(\cdot|s,a),P_\mathrm{tar}(\cdot|s,a))\right]\\
        &\leq \sum_{i=1}^n\alpha_i\cdot\sup_{s,a}[D_{\mathrm{TV}}(P^i_\mathrm{src}(\cdot|s,a),P_\mathrm{tar}(\cdot|s,a))].
    \end{aligned}
\end{equation*}
We then analyze term (\text{I}). Since $J_\mathcal{M}(\hat\pi)=\mathbb{E}_{s\sim\rho}[V^{\hat\pi}_\mathcal{M}(s)]$, we can focus on the value function:
\begin{equation*}
    \begin{aligned}
        V^\star_{\mathcal{M}_\mathrm{src}}-V^{\hat\pi}_{\mathcal{M}_\mathrm{src}}=\sum_{i=1}^n \alpha_i\cdot\left(V^\star_{\mathcal{M}_\mathrm{src}^i}-V^{\hat\pi}_{\mathcal{M}_\mathrm{src}^i}\right)+\underbrace{\left(V^\star_{\mathcal{M}_\mathrm{src}}-V^{\hat\pi}_{\mathcal{M}_\mathrm{src}}\right)-\sum_{i=1}^n\alpha_i\cdot \left(V^\star_{\mathcal{M}^i_\mathrm{src}}-V^{\hat\pi}_{\mathcal{M}_\mathrm{src}^i}\right)}_{(\text{III})}.
    \end{aligned}
\end{equation*}
We would like to analyze the sign of term (III). To achieve this, we define a function
\begin{equation*}
    F(P):=(V^\star-V^{\hat\pi})(P)=V^\star_P-V^{\hat\pi}_P.
\end{equation*}
Then term (\text{III}) is equivalent to
\begin{equation*}
    \begin{aligned}
        (\text{III})&=F(P_\mathrm{src})-\sum_{i=1}^n\alpha_i\cdot F(P_\mathrm{src}^i)\\
        &=F\left(\sum_{i=1}^n\alpha_i\cdot P_\mathrm{src}^i\right)-\sum_{i=1}^n\alpha_i\cdot F(P_\mathrm{src}^i).
    \end{aligned}
\end{equation*}
It is easy to find that $F(P)\geq0$ since the optimal value $V^\star$ is no smaller than $V^{\hat\pi}$, and the equality holds if and only if $P=P_0$ where $\hat\pi$ is the optimal policy with respect to dynamics $P_0$. Thus, $P_0$ is a global minimizer. By the first-order necessary condition of the minimizer, we have $\nabla F(P_0)=0$. Furthermore, we assume $F(P)$ satisfies the second-order sufficient condition (SOSC) at the minimizer $P_0$, that is, the Hessian at $P_0$ is positive definite. We note that SOSC is a common condition in numerical optimization~\cite{nocedal2006numerical}. Letting $\lambda_{\min}(\cdot)$ denote the minimum eigenvalue operator, we have $\lambda_{\min}(\nabla^2F(P_0))=\tilde{\lambda}>0$ given the positive definite condition. By Weyl's inequality~\cite{franklin2000matrix}, the minimum eigenvalue operator is continuous. Therefore, for any $\epsilon\in(0,\tilde{\lambda})$, there exists a $\delta>0$, such that for all $P$ with $\|P-P_0\|<\delta$:
\begin{equation*}
    \left|\lambda_{\min}(\nabla^2F(P))-\tilde{\lambda}\right|<\epsilon.
\end{equation*}
By choosing $\epsilon=\tilde{\lambda}/2$, we have $\lambda_{\min}(\nabla^2F(P))>\tilde{\lambda}/2>0$ for all $P$ in the $\delta$-neighborhood. Thus $F(P)$ is locally convex in the $\delta$-neighborhood of $P_0$. Since $\hat\pi$ is trained on $P_\mathrm{src}$ and the source domain datasets contain sufficient data, we make a reasonable assumption that $P_0\approx P_\mathrm{src}$, $\{P_\mathrm{src}^i\}_{i=1}^n$, and $P_\mathrm{src}$ are all in this neighborhood. By Jensen's inequality~\cite{boyd2004convex}, we have
\begin{equation*}
(\text{III})=F\left(\sum_{i=1}^n\alpha_i\cdot P_\mathrm{src}^i\right)-\sum_{i=1}^n\alpha_i\cdot F(P_\mathrm{src}^i)<0.
\end{equation*}
Therefore, we have
\begin{equation*}
    V^\star_{\mathcal{M}_\mathrm{src}}-V^{\hat\pi}_{\mathcal{M}_\mathrm{src}}<\sum_{i=1}^n \alpha_i\cdot\left(V^\star_{\mathcal{M}_\mathrm{src}^i}-V^{\hat\pi}_{\mathcal{M}_\mathrm{src}^i}\right).
\end{equation*}
That is,
\begin{equation}
\label{eq:app_value_alignment}
    |J_{\mathcal{M}_\mathrm{src}}(\hat\pi)-J_{\mathcal{M}_\mathrm{src}}(\pi^\star_\mathrm{src})|\leq\sum_{i=1}^n\alpha_i\cdot|J_{\mathcal{M}^i_\mathrm{src}}(\hat\pi) - J_{\mathcal{M}^i_\mathrm{src}}(\pi^{i\star}_{\mathrm{src}})|.
\end{equation}

Let $\epsilon^{i\star}_\mathrm{insrc}:=|J_{\mathcal{M}^i_\mathrm{src}}(\pi^\star_\mathrm{insrc})-J_{\mathcal{M}^i_\mathrm{src}}(\pi^\star_\mathrm{src})|$ denote the inherent performance difference between the optimal policy in $\mathcal{M}^i_\mathrm{src}$ and the in-sample optimal policy on $\mathcal{D}^i_\mathrm{src}$, and $\epsilon^{i}_{\mu_\mathrm{src}}:=|J_{\mathcal{M}^i_\mathrm{src}}(\hat{\pi})-J_{\mathcal{M}^i_\mathrm{src}}(\mu^i_\mathrm{src})|$ denote the performance difference between the learned policy and the behavior policy, then Equation~\ref{eq:app_value_alignment} could be further bounded by:
\begin{equation}
\label{eq:bound_delta}
    |J_{\mathcal{M}_\mathrm{src}}(\hat\pi)-J_{\mathcal{M}_\mathrm{src}}(\pi^\star_\mathrm{src})|\leq\sum_{i=1}^n\alpha_i\cdot|J_{\mathcal{M}^i_\mathrm{src}}(\mu^i_\mathrm{src}) - J_{\mathcal{M}^i_\mathrm{src}}(\pi^{i\star}_{\mathrm{insrc}})|+\underbrace{\sum_{i=1}^n\alpha_i\cdot(\epsilon^{i\star}_\mathrm{insrc}+\epsilon^{i}_{\mu_\mathrm{src}})}_{:=\Delta}.
\end{equation}

Note that for offline RL algorithms such as IQL~\cite{kostrikov2021offline}, we typically require the learned policy $\hat\pi$ to be close to the behavior policy $\mu_\mathrm{src}$, i.e., $D_\mathrm{KL} (\hat\pi,\mu_\mathrm{src})<\epsilon$. Since
\begin{equation*}
    D_\mathrm{KL}(\hat\pi,\mu_\mathrm{src})=D_\mathrm{KL}\left(\hat\pi,\sum_{i=1}^n\alpha_i\cdot\mu^i_\mathrm{src}\right)\leq\sum_{i=1}^n \alpha_i\cdot D_\mathrm{KL}(\hat\pi,\mu^i_\mathrm{src}),
\end{equation*}
we make a mild assumption that $D_\mathrm{KL}(\hat\pi,\mu^i_\mathrm{src})<\epsilon$ for $i=1,2,...,n$, which is a sufficient condition for the requirement $D_\mathrm{KL} (\hat\pi,\mu_\mathrm{src})<\epsilon$. Under such an assumption, following a similar proof procedure in Proposition 3 in~\citet{lyu2022mildly} and the performance difference lemma~\cite{kakade2002approximately}, $\epsilon^{i}_{\mu_\mathrm{src}}$ could be bounded by
\begin{equation*}
\epsilon^{i}_{\mu_\mathrm{src}}:=\left|J_{\mathcal{M}^i_\mathrm{src}}(\hat\pi)-J_{\mathcal{M}^i_\mathrm{src}}(\mu^i_\mathrm{src})\right|\leq\mathcal{O}\left(\frac{1}{(1-\gamma)^2}\right).
\end{equation*}
In the meantime, $\epsilon^{i\star}_\mathrm{insrc}$ could be seen as a constant for a given sub-dataset $\mathcal{D}^i_\mathrm{src}$. Therefore, $\Delta$ is a bounded term. 
% Substitute Equation~\ref{eq:diff_pi_mu} into Equation~\ref{eq:app_value_alignment}, we have
% \begin{equation}
% \label{eq:diff_pi_pistar}
%     |J_{\mathcal{M}_\mathrm{src}}(\hat\pi)-J_{\mathcal{M}_\mathrm{src}}(\pi^\star_\mathrm{src})|\leq\sum_{i=1}^n\alpha_i\cdot|J_{\mathcal{M}^i_\mathrm{src}}(\mu^i_\mathrm{src}) - J_{\mathcal{M}^i_\mathrm{src}}(\pi^{i\star}_{\mathrm{src}})|+\mathcal{O}\left(\frac{1}{(1-\gamma)^2}\right).
% \end{equation}
% Let $\epsilon^i_\mathrm{opt}:=|J_{\mathcal{M}^i_\mathrm{src}}(\pi^{i\star}_\mathrm{src})-J_{\mathcal{M}^i_\mathrm{src}}(\pi^{i\star}_\mathrm{insrc})|$ denote the performance difference between the optimal policy in $\mathcal{M}^i_\mathrm{src}$ and the in-sample optimal policy on $\mathcal{D}^i_\mathrm{src}$, which can be seen as a constant, then Equation~\ref{eq:diff_pi_pistar} could be further bounded as:
% \begin{equation}
% \label{eq:diff_pi_piinstar}
%         |J_{\mathcal{M}_\mathrm{src}}(\hat\pi)-J_{\mathcal{M}_\mathrm{src}}(\pi^\star_\mathrm{src})|\leq\sum_{i=1}^n\alpha_i\cdot|J_{\mathcal{M}^i_\mathrm{src}}(\mu^i_\mathrm{src}) - J_{\mathcal{M}^i_\mathrm{src}}(\pi^{i\star}_{\mathrm{insrc}})|+\underbrace{\sum_{i=1}^n\alpha_i\cdot\epsilon^i_\mathrm{opt}+\mathcal{O}\left(\frac{1}{(1-\gamma)^2}\right)}_{C_2}.
% \end{equation}
Substituting Equation~\ref{eq:app_dynamics_alignment} and Equation~\ref{eq:bound_delta} into Equation~\ref{eq:app_multi_subopt}, we can get the desired bound and conclude the proof.
\end{proof}

\subsection{Proof of Proposition~\ref{prop:tc_elbo}}
\begin{proof}
Let $\tau=(s_0,a_0,s_1,...,s_T)$ be a trajectory sampled from the source dataset $\mathcal{D}_\mathrm{src}$. We aim to maximize the log-likelihood $\log p_\theta(\tau)$. By using the law of total probability, we express the log-likelihood as
\begin{equation*}
    \log p_\theta(\tau)=\log\int p_\theta(\tau,z)dz,
\end{equation*}
where $z$ is the latent variable. Multiplying and Dividing the integrand by the variational posterior $q_\psi(z|\tau)$, we obtain
\begin{equation*}
    \log p_\theta(\tau)=\log \int q_\psi(z|\tau)\frac{p_\theta(\tau,z)}{q_\psi(z|\tau)}dz=\log\mathbb{E}_{z\sim q_\psi(\cdot|\tau)}\left[\frac{p_\theta(\tau,z)}{q_\psi(z|\tau)}\right].
\end{equation*}
Using Jensen's inequality~\cite{boyd2004convex}, we obtain the lower bound of $\log p_\theta(\tau)$ as
\begin{equation*}
    \log p_\theta(\tau)\geq\mathbb{E}_{z\sim q_\psi(\cdot|\tau)}\left[\log\frac{p_\theta(\tau,z)}{q_\psi(z|\tau)}\right].
\end{equation*}
Using the definition of conditional probability, we have $p_\theta(\tau,z)=p_\theta(\tau|z)p(z)$. Therefore, the right-hand side can be formulated as
\begin{equation*}
    \begin{aligned}
        \mathbb{E}_{z\sim q_\psi(\cdot|\tau)}\left[\log\frac{p_\theta(\tau,z)}{q_\psi(z|\tau)}\right]&=\mathbb{E}_{z\sim q_\psi(\cdot|\tau)}\left[\log\frac{p_\theta(\tau|z)p(z)}{q_\psi(z|\tau)}\right]\\
        &=\mathbb{E}_{z\sim q_\psi(\cdot|\tau)}\left[\log p_\theta(\tau|z)\right]-\mathbb{E}_{z\sim q_\psi(\cdot|\tau)}\left[\log\frac{q_\psi(z|\tau)}{p(z)}\right]\\
        &=\mathbb{E}_{z\sim q_\psi(\cdot|\tau)}\left[\log p_\theta(\tau|z)\right]-D_\mathrm{KL}(q_\psi(\cdot|\tau), p(\cdot)),
    \end{aligned}
\end{equation*}
which futher leads to
\begin{equation*}
    \log p_\theta(\tau)\geq\mathbb{E}_{z\sim q_\psi(\cdot|\tau)}\left[\log p_\theta(\tau|z)\right]-D_\mathrm{KL}(q_\psi(\cdot|\tau), p(\cdot)).
\end{equation*}
Assuming the Markov property for the transition dynamics given the latent representation $z$, we have
\begin{equation*}
    p_\theta(\tau|z)=p(s_0)\prod_{t=0}^{T-1}p_\theta(s_{t+1}|s_t,a_t,z).
\end{equation*}
Taking the logarithm of both sides and omitting the initial state distribution $p(s_0)$ (which is constant w.r.t. $\theta$ and $z$), we have
\begin{equation*}
    \log p_\theta(\tau|z)=\sum_{t=0}^{T-1}\log p_\theta(s_{t+1}|s_t,a_t,z).
\end{equation*}
Then the temporally-consistent ELBO for each trajectory $\tau$ can be expressed as
\begin{equation*}
    \mathsf{TC\text{-}ELBO}_{\psi, \theta}(\tau):=\sum_{t=0}^{T-1}\mathbb{E}_{z\sim q_\psi(\cdot|\tau)}\left[\log p_\theta(s_{t+1}|s_t,a_t,z)\right]-D_\mathrm{KL}(q_\psi(\cdot|\tau), p(\cdot)).
\end{equation*}
Taking the expectation over all trajectories $\tau\in\mathcal{D}_\mathrm{src}$ yields our temporally-consistent ELBO as follows:
\begin{align*}
    \mathsf{TC\text{-}ELBO}_{\psi, \theta} &:=\mathbb{E}_{\tau\sim\mathcal{D}_\mathrm{src}}[\mathsf{TC\text{-}ELBO}_{\psi, \theta}(\tau)]\\
    &=\mathbb{E}_{\tau\sim\mathcal{D}_\mathrm{src}}\left[\sum_{t=0}^{T-1}\mathbb{E}_{z\sim q_\psi(\cdot|\tau)}\left[\log p_\theta(s_{t+1}|s_t,a_t,z)\right]-D_\mathrm{KL}(q_\psi(\cdot|\tau), p(\cdot))\right]\\
    &=\mathbb{E}_{\tau \sim \mathcal{D}_\mathrm{src},z \sim q_\psi(\cdot \mid \tau)} \left[ \sum_{t=0}^{T-1} \log p_\theta(s_{t+1} \mid s_t, a_t, z) - {D_{\mathrm{KL}} \big( q_\psi(\cdot \mid \tau),  p(\cdot) \big)} \right].
\end{align*}
This completes our proof.
\end{proof}

\section{Useful Lemmas}
\label{app:useful_lemmas}

\begin{lemma}
\label{lemma:useful_bound}
    Denote $\mathcal{M}_1=(\mathcal{S},\mathcal{A},P_1,r,\rho,\gamma)$ and $\mathcal{M}_2=(\mathcal{S},\mathcal{A},P_2,r,\rho,\gamma)$ as two MDPs that only differ in transition dynamics. Then for any policy $\pi$, we have
\begin{equation*}
    |J_{\mathcal{M}_1}(\pi)-J_{\mathcal{M}_2}(\pi)|\leq C\cdot \sup_{s,a}[D_{\mathrm{TV}}(P_1(\cdot|s,a),P_2(\cdot|s,a))],
\end{equation*}
where $C=\frac{2\gamma r_\mathrm{max}}{(1-\gamma)^2}$ is a positive constant.
\end{lemma}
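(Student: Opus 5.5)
The plan is to use the standard simulation-lemma argument: write the value gap as a discounted sum of one-step dynamics discrepancies along trajectories of $\pi$ in one MDP. Let $V_1^\pi, V_2^\pi$ be the value functions of $\pi$ in $\mathcal{M}_1,\mathcal{M}_2$. Then $J_{\mathcal{M}_1}(\pi)-J_{\mathcal{M}_2}(\pi)=\mathbb{E}_{s\sim\rho}[V_1^\pi(s)-V_2^\pi(s)]$, so it suffices to bound $\sup_s|V_1^\pi(s)-V_2^\pi(s)|$.

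The first step writes the Bellman equations $V_j^\pi(s)=\mathbb{E}_{a\sim\pi(\cdot|s)}[r(s,a)+\gamma\,\mathbb{E}_{s'\sim P_j(\cdot|s,a)}V_j^\pi(s')]$ for $j=1,2$ and subtracts them, adding and subtracting $\gamma\,\mathbb{E}_{s'\sim P_1}V_2^\pi(s')$. This gives
\begin{equation*}
V_1^\pi(s)-V_2^\pi(s)=\gamma\,\mathbb{E}_{a\sim\pi}\Big[\mathbb{E}_{s'\sim P_1}\big(V_1^\pi-V_2^\pi\big)(s')+\big(\mathbb{E}_{s'\sim P_1}-\mathbb{E}_{s'\sim P_2}\big)V_2^\pi(s')\Big].
\end{equation*}
Taking the supremum over $s$ and writing $\delta:=\sup_s|V_1^\pi(s)-V_2^\pi(s)|$ yields $\delta\le\gamma\delta+\gamma\sup_{s,a}|(P_1-P_2)V_2^\pi|$.

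The second step bounds the difference of expectations. Since $|V_2^\pi|\le r_{\max}/(1-\gamma)$ and $|\mathbb{E}_{P_1}f-\mathbb{E}_{P_2}f|\le\|f\|_\infty\int|P_1-P_2|=2\|f\|_\infty D_{\mathrm{TV}}(P_1,P_2)$, we get
\begin{equation*}
\sup_{s,a}\big|(P_1-P_2)V_2^\pi\big|\le\frac{2r_{\max}}{1-\gamma}\sup_{s,a}D_{\mathrm{TV}}\big(P_1(\cdot|s,a),P_2(\cdot|s,a)\big).
\end{equation*}
Rearranging then gives $(1-\gamma)\delta\le\frac{2\gamma r_{\max}}{1-\gamma}\sup_{s,a}D_{\mathrm{TV}}$, i.e.\ the stated constant $C=\frac{2\gamma r_{\max}}{(1-\gamma)^2}$. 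Averaging over $s\sim\rho$ finishes the proof.

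There is no real obstacle here. The only care needed is to use the TV convention $D_{\mathrm{TV}}=\tfrac12\int|P_1-P_2|$, which the paper uses in \cref{eq:app_dynamics_alignment}; this is where the factor $2$ comes from. One also needs $\delta<\infty$ to justify rearranging, which holds because both value functions are bounded by $r_{\max}/(1-\gamma)$. A sharper constant is possible by centering $V_2^\pi$, but it is not needed.
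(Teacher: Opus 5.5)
Your argument is correct and is the standard simulation-lemma bound the paper relies on. The paper gives no proof of its own here and simply defers to Lemma 4.1 of DVDF~\cite{qiao2025cross}, which is this same bound with the same constant $\frac{2\gamma r_{\max}}{(1-\gamma)^2}$; your sup-norm Bellman recursion is the worst-case form of the usual telescoping/occupancy-measure derivation and makes the result self-contained. One small correction to your closing aside: because $r$ takes values in $[-r_{\max},r_{\max}]$, the span of $V_2^\pi$ can already equal $2r_{\max}/(1-\gamma)$, so centering $V_2^\pi$ does not sharpen the constant here (it would halve it only for nonnegative rewards).
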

\begin{proof}
    Please see the proof of Lemma 4.1 in~\citet{qiao2025cross}.
\end{proof}

\begin{lemma} 
\label{lemma:useful_optimalbound}
Denote $\mathcal{M}_1=(\mathcal{S},\mathcal{A},P_1,r,\rho,\gamma)$ and $\mathcal{M}_2=(\mathcal{S},\mathcal{A},P_2,r,\rho,\gamma)$ as two MDPs that only differ in transition dynamics. We also denote the optimal policies of MDPs $\mathcal{M}_1$ and $\mathcal{M}_2$ as $\pi^\star_1$ and $\pi^\star_2$. Then we have
\begin{equation*}
    |J_{\mathcal{M}_1}(\pi_1^\star)-J_{\mathcal{M}_2}(\pi^\star_2)|\leq C_2\cdot\sup_{s,a}[D_{\mathrm{TV}}(P_1(\cdot|s,a),P_2(\cdot|s,a))],
\end{equation*}
where $C_2=\frac{2r_\mathrm{max}}{(1-\gamma)^2}$ is a positive constant.
\end{lemma}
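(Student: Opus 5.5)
Write $V^\star_i:=V^{\pi^\star_i}_{\mathcal{M}_i}$ for $i=1,2$. The plan is to compare optimal values through the optimal Bellman operators, not through a fixed policy. Then Lemma~\ref{lemma:useful_bound} cannot be used naively, because the two optimal policies differ.

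The first route is a sandwich argument with Lemma~\ref{lemma:useful_bound}. By optimality, $J_{\mathcal{M}_1}(\pi_1^\star)\ge J_{\mathcal{M}_1}(\pi_2^\star)$. Lemma~\ref{lemma:useful_bound} applied to $\pi_2^\star$ then gives
\[
J_{\mathcal{M}_1}(\pi_1^\star)\ge J_{\mathcal{M}_2}(\pi_2^\star)-\tfrac{2\gamma r_\mathrm{max}}{(1-\gamma)^2}\,\epsilon_P ,
\qquad \epsilon_P:=\sup_{s,a}D_{\mathrm{TV}}(P_1(\cdot|s,a),P_2(\cdot|s,a)).
\]
Swapping the roles of the two MDPs gives the reverse inequality. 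Hence $|J_{\mathcal{M}_1}(\pi_1^\star)-J_{\mathcal{M}_2}(\pi_2^\star)|\le \frac{2\gamma r_\mathrm{max}}{(1-\gamma)^2}\epsilon_P$. This is already at most $C_2\epsilon_P$ since $\gamma<1$, so the claimed constant $C_2=\frac{2r_\mathrm{max}}{(1-\gamma)^2}$ follows and the lemma is essentially a corollary of Lemma~\ref{lemma:useful_bound}.

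For a self-contained derivation that does not lean on the cited Lemma 4.1, I would use a contraction argument. Let $\mathcal{T}_k$ be the optimal Bellman operator of $\mathcal{M}_k$. For any $s$, use $|\max_a f-\max_a g|\le\max_a|f-g|$ and add and subtract $\gamma\,\mathbb{E}_{P_1}V_2^\star$. This yields
\[
\|V_1^\star-V_2^\star\|_\infty\le \gamma\|V_1^\star-V_2^\star\|_\infty+\gamma\sup_{s,a}\big|\mathbb{E}_{P_1(\cdot|s,a)}V_2^\star-\mathbb{E}_{P_2(\cdot|s,a)}V_2^\star\big|.
\]
The last term is at most $2\epsilon_P\|V_2^\star\|_\infty\le 2\epsilon_P\,r_\mathrm{max}/(1-\gamma)$. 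Rearranging gives $\|V_1^\star-V_2^\star\|_\infty\le\frac{2\gamma r_\mathrm{max}}{(1-\gamma)^2}\epsilon_P$. Taking the expectation over the shared $\rho$ finishes the proof.

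The only delicate step is this TV-to-expectation bound: the factor $2$ with the $\frac12\int|\cdot|$ convention, and the centering of $V$ if one wants a sharper constant. It does not matter here, since the loose bound already fits within $C_2$.
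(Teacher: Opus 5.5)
Your proof is correct, and both of your routes give the sharper constant $\frac{2\gamma r_\mathrm{max}}{(1-\gamma)^2}\le C_2$.

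\textbf{The paper's proof.} It is a two-line reduction. It writes $|J_{\mathcal{M}_1}(\pi_1^\star)-J_{\mathcal{M}_2}(\pi_2^\star)|=|\mathbb{E}_\rho[V^\star_{\mathcal{M}_1}(s)-V^\star_{\mathcal{M}_2}(s)]|$. It then imports, without proof, the state-wise bound $\max_s|V^\star_{\mathcal{M}_1}(s)-V^\star_{\mathcal{M}_2}(s)|\le\frac{2r_\mathrm{max}}{(1-\gamma)^2}\epsilon_P$ from Equation (21) of the DVDF paper, and averages over $\rho$.

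\textbf{Your contraction argument.} This has the same structure as the paper's proof, made self-contained. It proves the imported sup-norm bound via the optimal Bellman operators, picking up an extra factor $\gamma$, and then averages over $\rho$.

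\textbf{Your sandwich argument.} This is a genuinely different route. It never uses optimal value functions, Bellman optimality, or the identity $J_{\mathcal{M}_k}(\pi_k^\star)=\mathbb{E}_\rho V^\star_{\mathcal{M}_k}$. It needs only two things: that each $\pi_k^\star$ maximizes $J_{\mathcal{M}_k}$, and the fixed-policy bound of Lemma~\ref{lemma:useful_bound}, applied once to $\pi_2^\star$ and once to $\pi_1^\star$. This buys three things:
\begin{itemize}
\item The lemma becomes a direct corollary of a result the paper already states.
\item It goes through even if ``optimal'' means optimal only from $\rho$.
\item It shows the paper's constant $C_2$ is loose by a factor of $\gamma$.
\end{itemize}
In exchange, the paper's route (and your contraction route) yields a stronger uniform-in-state conclusion, valid for every initial distribution at once. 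The lemma as stated does not need that.

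\textbf{A wording point.} Your opening says Lemma~\ref{lemma:useful_bound} ``cannot be used naively,'' and your first route then uses it. That is fine, because each application keeps a single policy fixed across the two MDPs. The sandwich is exactly the non-naive use, and the plan should say so.
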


\begin{proof}
    The proof mainly follows that of Proposition 4.1 in~\citet{qiao2025cross}.
    Since $J_\mathcal{M}(\pi)=\mathbb{E}_{s\sim\rho}[V_\mathcal{M}^\pi(s)]$, we have $|J_{\mathcal{M}_1}(\pi_1^\star)-J_{\mathcal{M}_2}(\pi_2^\star)|=|\mathbb{E}_{\rho}(V^\star_{\mathcal{M}_1}(s)-V^\star_{\mathcal{M}_2}(s))|$. According to Equation (21) in~\citet{qiao2025cross}, we have
    \begin{equation*}
        \max_{s\in\mathcal{S}}\left|V^\star_{\mathcal{M}_1}(s)-V^\star_{\mathcal{M}_2}(s)\right|\leq \frac{2r_\mathrm{max}}{(1-\gamma)^2}\sup_{s,a}[D_{\mathrm{TV}}(P_1(\cdot|s,a),P_2(\cdot|s,a))].
    \end{equation*}
    Therefore, we have
    \begin{equation*}
        \begin{aligned}
        |J_{\mathcal{M}_1}(\pi_1^\star)-J_{\mathcal{M}_2}(\pi_2^\star)|&=|\mathbb{E}_{\rho}(V^\star_{\mathcal{M}_1}(s)-V^\star_{\mathcal{M}_2}(s))|\\
        &\leq\max_{s\in\mathcal{S}}\left|V^\star_{\mathcal{M}_1}(s)-V^\star_{\mathcal{M}_2}(s)\right|\\
        &\leq\frac{2r_\mathrm{max}}{(1-\gamma)^2}\sup_{s,a}[D_{\mathrm{TV}}(P_1(\cdot|s,a),P_2(\cdot|s,a))].
        \end{aligned}
    \end{equation*}
Then we conclude the proof.
\end{proof}

\section{Environmental Setting}
\label{sec:environ_setting}

\subsection{Tasks and Datasets}

\textbf{Target domain and datasets.} We adopt four locomotion tasks from MuJoCo~\cite{todorov2012mujoco} as the target domain tasks: \texttt{halfcheetah-v2}, \texttt{hopper-v2}, \texttt{walker2d-v2} and \texttt{ant-v3}. Since cross-domain offline RL allows only a small quantity of target domain data, we sample 10\% data from D4RL~\cite{fu2020d4rl} datasets as the target domain datasets. We allow the target domain datasets to contain data with three data qualities for each task: the \textbf{medium} datasets that contain samples collected by an early-stopped SAC~\cite{haarnoja2018soft} policy; the \textbf{expert} datasets that are collected by an SAC policy trained to the expert level; and the \textbf{medium-expert} datasets that mix the medium data and expert data at a 50-50 ratio. 

\textbf{Source domain and datasets.} We consider two forms of dynamics shifts across four MuJoCo environments (\texttt{halfcheetah-v2}, \texttt{hopper-v2}, \texttt{walker2d-v2}, and \texttt{ant-v3}): kinematic and morphology shifts. Kinematic shifts simulate a situation where certain joints are frozen and cannot move, effectively representing a broken robot. Morphology shifts, on the other hand, involve changing the physical structure of the robot compared to the target domain. We further introduce two levels of shifts for each shift type: \texttt{easy} and \texttt{medium}, based on the magnitude of the shift. Thus, there are four types of dynamics shifts in total: \texttt{kinematic-easy}, \texttt{kinematic-medium}, \texttt{morphology-easy}, and \texttt{morphology-medium}. Details regarding the code modifications for these dynamics shifts are provided in the following section. 

Regarding the source domain datasets, we adopt a heterogeneous cross-domain offline setting. Specifically, the data in the source domain datasets is collected from distinct source domains using diverse behavior policies. To simulate this setting, for each task, we collect data of two mixed qualities: \texttt{medium-replay} and \texttt{medium-expert}, across four types of source domains. We then construct the source domain datasets by mixing the data of the same data quality from these four domains in equal proportions (25\% each). Consequently, each source domain dataset encapsulates multi-modal dynamics (\texttt{kinematic-easy}, \texttt{kinematic-medium}, \texttt{morphology-easy}, \texttt{morphology-medium}) as well as mixed behavior policies (\texttt{medium-replay} or \texttt{medium-expert}). Each source domain dataset comprises approximately 1M samples, much larger than that of the target domain dataset.

\subsection{Kinematic Shifts Realization}

We realize the kinematic shifts by modifying the \texttt{.xml} files of the original environments. Specifically, we change the rotation angle of some joints of the robots as follows:

\textbf{\textit{halfcheetah-kinematic-easy}:} The rotation angle of the joint on the thigh of the robot's back leg is modified from $[-0.52,1.05]$ to $[-0.052,0.105]$.

\begin{lstlisting}[language=Python]
# broken back thigh joint
<joint axis="0 1 0" damping="6" name="bthigh" pos="0 0 0" range="-.052 .105" stiffness="240" type="hinge"/>
\end{lstlisting}

\textbf{\textit{halfcheetah-kinematic-medium}:} The rotation angle of the joint on the thigh of the robot's back leg is modified from $[-0.52,1.05]$ to $[-0.0052,0.0105]$.

\begin{lstlisting}[language=Python]
# broken back thigh joint
<joint axis="0 1 0" damping="6" name="bthigh" pos="0 0 0" range="-.0052 .0105" stiffness="240" type="hinge"/>
\end{lstlisting}

\textbf{\textit{hopper-kinematic-easy}:} The rotation angle of the head joint is modified from $[-150,0]$ to $[-15,0]$, and the rotation angle of the foot joint is reduced from $[-45,45]$ to $[-30,-30]$.

\begin{lstlisting}[language=Python]
# broken head joint
<joint axis="0 -1 0" name="thigh_joint" pos="0 0 1.05" range="-15 0" type="hinge"/>
# broken foot joint
<joint axis="0 -1 0" name="foot_joint" pos="0 0 0.1" range="-30 30" type="hinge"/>
\end{lstlisting}

\textbf{\textit{hopper-kinematic-medium}:} The rotation angle of the head joint is modified from $[-150,0]$ to $[-1.5,0]$, and the rotation angle of the foot joint is narrowed from $[-45,45]$ to $[-10,-10]$.

\begin{lstlisting}[language=Python]
# broken head joint
<joint axis="0 -1 0" name="thigh_joint" pos="0 0 1.05" range="-1.5 0" type="hinge"/>
# broken foot joint
<joint axis="0 -1 0" name="foot_joint" pos="0 0 0.1" range="-10 10" type="hinge"/>
\end{lstlisting}

\textbf{\textit{walker2d-kinematic-easy}:} The rotation angle of the right foot joint is narrowed from $[-45,45]$ to $[-4.5,4.5]$.

\begin{lstlisting}[language=Python]
# broken right foot joint
<joint axis="0 -1 0" name="foot_joint" pos="0 0 0.1" range="-4.5 4.5" type="hinge"/>
\end{lstlisting}

\textbf{\textit{walker2d-kinematic-medium}:} The rotation angle of the right foot joint is narrowed from $[-45,45]$ to $[-0.45,0.45]$.

\begin{lstlisting}[language=Python]
# broken right foot joint
<joint axis="0 -1 0" name="foot_joint" pos="0 0 0.1" range="-0.45 0.45" type="hinge"/>
\end{lstlisting}

\textbf{\textit{ant-kinematic-easy}:} The rotation angles of the joints on the hip of two front legs are modified from $[-30,30]$ to $[-3,3]$.

\begin{lstlisting}[language=Python]
# broken hip joints of front legs
<joint axis="0 0 1" name="hip_1" pos="0.0 0.0 0.0" range="-3 3" type="hinge"/>
<joint axis="0 0 1" name="hip_2" pos="0.0 0.0 0.0" range="-3 3" type="hinge"/>
\end{lstlisting}

\textbf{\textit{ant-kinematic-medium}:} The rotation angles of the joints on the hip of two front legs are modified from $[-30,30]$ to $[-0.3,0.3]$.

\begin{lstlisting}[language=Python]
# broken hip joints of front legs
<joint axis="0 0 1" name="hip_1" pos="0.0 0.0 0.0" range="-0.3 0.3" type="hinge"/>
<joint axis="0 0 1" name="hip_2" pos="0.0 0.0 0.0" range="-0.3 0.3" type="hinge"/>
\end{lstlisting}

\subsection{Morphology Shifts Realization}

We change the morphology of the robots by modifying the \texttt{.xml} files as follows.

\textbf{\textit{halfcheetah-morph-easy}:} The torso of the robot is set to 0.8 times that in the target domain.

\begin{lstlisting}[language=Python]
# torso
<geom fromto="-.4 0 0 .4 0 0" name="torso" size="0.046" type="capsule"/>
<geom axisangle="0 1 0 .87" name="head" pos=".5 0 .1" size="0.046 .15" type="capsule"/>
<body name="bthigh" pos="-.4 0 0">
<body name="fthigh" pos=".4 0 0">
\end{lstlisting}

\textbf{\textit{halfcheetah-morph-medium}:} The torso of the robot is set to 0.5 times that in the target domain.

\begin{lstlisting}[language=Python]
# torso
<geom fromto="-.4 0 0 .4 0 0" name="torso" size="0.046" type="capsule"/>
<geom axisangle="0 1 0 .87" name="head" pos=".35 0 .1" size="0.046 .15" type="capsule"/>
<body name="bthigh" pos="-.25 0 0">
<body name="fthigh" pos=".25 0 0">
\end{lstlisting}

\textbf{\textit{hopper-morph-easy}:} The torso is increased to 1.5 times that in the target domain, and the length of the torso becomes 0.48.

\begin{lstlisting}[language=Python]
# torso
<geom friction="0.9" fromto="0 0 1.53 0 0 1.05" name="torso_geom" size="0.075" type="capsule"/>
\end{lstlisting}

\textbf{\textit{hopper-morph-medium}:} The torso is increased to 2.0 times that in the target domain, and the length of the torso becomes 0.64.

\begin{lstlisting}[language=Python]
# torso
<geom friction="0.9" fromto="0 0 1.69 0 0 1.05" name="torso_geom" size="0.1" type="capsule"/>
\end{lstlisting}

\textbf{\textit{walker2d-morph-easy}:} The torso size is changed to 1.5 times that in the target domain, and the length of the torso becomes 0.48.

\begin{lstlisting}[language=Python]
# torso
<geom friction="0.9" fromto="0 0 1.53 0 0 1.05" name="torso_geom" size="0.075" type="capsule"/>
\end{lstlisting}

\textbf{\textit{walker2d-morph-medium}:} The torso is changed to 2.0 times that in the target domain, and the length of the torso becomes 0.64.

\begin{lstlisting}[language=Python]
# torso
<geom friction="0.9" fromto="0 0 1.69 0 0 1.05" name="torso_geom" size="0.1" type="capsule"/>
\end{lstlisting}

\textbf{\textit{ant-morph-easy}:} The leg sizes of the front two legs are changed to 0.8 times those in the source domain.

\begin{lstlisting}[language=Python]
# leg size
<geom fromto="0.0 0.0 0.0 0.32 0.32 0.0" name="left_ankle_geom" size="0.08" type="capsule"/>
<geom fromto="0.0 0.0 0.0 -0.32 0.32 0.0" name="right_ankle_geom" size="0.08" type="capsule"/>
\end{lstlisting}

\textbf{\textit{ant-morph-medium}:} The sizes of the front two legs are changed to be 0.5 times those in the source domain.

\begin{lstlisting}[language=Python]
# leg size
<geom fromto="0.0 0.0 0.0 0.2 0.2 0.0" name="left_ankle_geom" size="0.08" type="capsule"/>
<geom fromto="0.0 0.0 0.0 -0.2 0.2 0.0" name="right_ankle_geom" size="0.08" type="capsule"/>
\end{lstlisting}

\section{Implementation Details}

In this section, we provide the implementation details of the baselines and our method V2A.

\subsection{Baselines}

\textbf{IQL:} we train IQL~\cite{kostrikov2021offline} on both target and source domain datasets. The state value function is updated by expectile regression:
\begin{equation}
    \mathcal{L}_V=\mathbb{E}_{(s,a)\sim\mathcal{D}_\mathrm{src}\cup\mathcal{D}_\mathrm{tar}}\left[L_2^\tau\right(Q_{\theta}(s,a)-V_\psi(s))],
\end{equation}
where $L_2^\tau(u)=|\tau-\mathbb{I}(u<0)|u^2$, $\mathbb{I}(\cdot)$ is the indicator function. This enables learning an in-sample optimal value function. Then the state-action value function is updated by:
\begin{equation}
    \mathcal{L}_Q=\mathbb{E}_{(s,a,r,s^\prime)\sim\mathcal{D}_\mathrm{src}\cup\mathcal{D}_\mathrm{tar}}\left[(r(s,a)+\gamma V_\psi(s^\prime)-Q_\theta(s,a))^2\right].
\end{equation}
Then the advantage function is extracted by $A(s,a)=Q(s,a)-V(s)$. The policy is optimized via exponential advantage weighted behavior cloning:
\begin{equation}
    \mathcal{L}_\pi=-\mathbb{E}_{(s,a)\sim\mathcal{D}_\mathrm{src}\cup\mathcal{D}_\mathrm{tar}}[\exp(\beta\cdot A(s,a))\log\pi_\phi(a|s)],
\end{equation}
where $\beta$ is the temperature coefficient. We implement IQL based on its official codebase\footnote{https://github.com/ikostrikov/implicit\_q\_learning.git}.

\textbf{BOSA:} BOSA~\cite{liu2024beyond} tackles the state-action OOD problem and dynamics OOD problem in cross-domain offline RL with supported policy optimization and supported value optimization, respectively. Specifically, BOSA updates the critic with supported value optimization:
\begin{equation*}
\mathcal{L}_Q=\mathbb{E}_{(s,a)\sim\mathcal{D}_{\mathrm{src}}}\left[Q_{\theta}(s,a)\right] + \mathbb{E}_{\substack{(s,a,r,s^\prime)\sim\mathcal{D}_{\mathrm{src}}\cup\mathcal{D}_\mathrm{tar},\\ a^\prime\sim\pi_\phi(s^\prime)}}\left[\mathbb{I}(\hat{P}_\mathrm{tar}(s'|s,a) > \epsilon)(Q_{\theta}(s,a) - y)^2\right],
\end{equation*}
where $\mathbb{I}(\cdot)$ is the indicator function, and $\hat{P}_{\text{tar}}(s^\prime|s,a)$ is the estimated target dynamics. The policy is updated by supported policy optimization to mitigate the OOD action issue:
\begin{equation*}
    \mathcal{L}_\pi=\mathbb{E}_{s \sim \mathcal{D}_{\text{src}} \cup \mathcal{D}_{\text{tar}}, \ a \sim \pi_{\phi}(s)} \left[Q_{\theta}(s, a)\right], \quad \text{s.t. } ~\mathbb{E}_{s \sim \mathcal{D}_{\text{src}} \cup \mathcal{D}_{\text{tar}}} \left[\hat{\pi}_{\text{mix}}(\pi_{\phi}(s) \mid s)\right] > \epsilon',
\end{equation*}
where $\hat{\pi}_{{\text{mix}}}(\cdot|s)$ is the behavior policy of the mixed datasets $\mathcal{D}_{\text{src}}\cup\mathcal{D}_{\text{tar}}$ learned with CVAE~\citep{sohn2015learning}. We implement BOSA from ODRL\footnote{{https://github.com/OffDynamicsRL/off-dynamics-rl.git}} benchmark~\citep{lyu2024odrl}, which provides implementations for various off-dynamics RL algorithms.

\textbf{DARA:} DARA~\citep{liu2022dara} employs dynamics-aware reward modification to achieve dynamics adaptation. Specifically, DARA trains two domain classifiers $q_{\theta_{SAS}}(\text{target}|s_t,a_t,s_{t+1})$ and $q_{\theta_{SA}}(\text{target}|s_t,a_t)$ as follows:
\begin{equation*}
    \begin{aligned}
    \mathcal{L}_{\theta_{SAS}}&=\mathbb{E}_{\mathcal{D}_{\mathrm{tar}}}\left[\log q_{\theta_{SAS}}(\mathrm{target}|s_t, a_t, s_{t+1})\right]+\mathbb{E}_{\mathcal{D}_{\mathrm{src}}}\left[\log(1- q_{\theta_{SAS}}(\mathrm{target}|s_t, a_t, s_{t+1}))\right] \\
    \mathcal{L_{\theta_{SA}}}&=\mathbb{E}_{\mathcal{D}_{\mathrm{tar}}}\left[\log q_{\theta_{SA}}(\mathrm{target}|s_t, a_t)\right]+\mathbb{E}_{\mathcal{D}_{\mathrm{src}}}\left[\log(1- q_{\theta_{SA}}(\mathrm{target}|s_t, a_t))\right].
    \end{aligned}
\end{equation*}
The domain classifiers are used to measure the dynamics gap $\log\frac{P_{\mathcal{M}_{\mathrm{tar}}}(s_{t+1}|s_t,a_t)}{P_{\mathcal{M}_{\mathrm{src}}}(s_{t+1}|s_t,a_t)}$ between the source domain and the target domain according to Bayes' rule. Then the estimated dynamics gap serves as a penalty to the source domain rewards:
\begin{equation*}
    \hat{r}_{\text{DARA}} = r - \lambda \times \delta_r, \quad \delta_r(s_t, a_t) = - \log \frac{q_{\theta_{\text{SAS}}}(\text{target} | s_t, a_t, s_{t+1}) q_{\theta_{\text{SA}}}(\text{source} | s_t, a_t)}{q_{\theta_{\text{SAS}}}(\text{source} | s_t, a_t, s_{t+1}) q_{\theta_{\text{SA}}}(\text{target} | s_t, a_t)},
\end{equation*}
where $\lambda$ controls the intensity of the reward penalty. We use the DARA implementation from ODRL and follow the hyperparameter setting in the original paper: $\lambda$ is set to $0.1$, and the reward penalty is clipped within $[-10,10]$.

\textbf{IGDF:} IGDF~\cite{wen2024contrastive} quantifies the domain discrepancy between the source domain and the target domain with contrastive representation learning. To facilitate effective knowledge transfer, IGDF implements data filtering to selectively share source domain samples exhibiting smaller dynamics gaps. Specifically, IGDF trains a score function $h(\cdot)$ using $(s,a,s^\prime_{\mathrm{tar}})\sim\mathcal{D}_{\mathrm{tar}}$ as the positive samples, and transitions $(s,a,s^\prime_{\mathrm{src}})$ as the negative samples, where $(s,a)\sim\mathcal{D}_{\mathrm{tar}}$ and $s^\prime_{\mathrm{src}}\sim\mathcal{D}_{\mathrm{src}}$. $h(\cdot)$ is optimized via the following contrastive learning objective:
\begin{equation*}
    \mathcal{L}=-\mathbb{E}_{(s,a,s^\prime_{\mathrm{tar}})}\mathbb{E}_{s^\prime_{\text{src}}}\left[\log\frac{h(s,a,s^\prime_{\mathrm{tar}})}{\sum_{s^\prime\in s^\prime_{\mathrm{tar}}\cup s^\prime_{\mathrm{src}}}h(s,a,s^\prime)}\right].
\end{equation*}
Based on the learned score function, IGDF proposes to selectively share source domain data for training value functions:
\begin{equation*}
    \mathcal{L}_Q = \frac{1}{2} \mathbb{E}_{\mathcal{D}_{\mathrm{tar}}} \left[ (Q_{\theta} - \mathcal{T}Q_{\theta})^2 \right] + \frac{1}{2} \alpha \cdot h(s, a, s') \mathbb{E}_{(s, a, s') \sim \mathcal{D}_{\mathrm{src}}} \left[ \mathbb{I}(h(s, a, s') > h_{\xi\%})(Q_{\theta} - \mathcal{T}Q_{\theta})^2 \right],
\end{equation*}
where $\mathbb{I}(\cdot)$ is the indicator function, $\alpha$ is the weighting coefficient, $\xi$ is the data selection ratio. We implement IGDF based on its official codebase\footnote{{https://github.com/BattleWen/IGDF.git}}.

\textbf{OTDF:} OTDF~\cite{lyu2025cross} estimates the discrepancy between the source domain and target domain by computing the Wasserstein distance~\cite{gabriel2019computational}:
\begin{equation}
\label{eq:ot}
\mathcal{W}(u, u') = \min_{\mu \in M} \sum_{t=1}^{|\mathcal{D}_{\mathrm{src}}|} \sum_{t'=1}^{|\mathcal{D}_{\mathrm{tar}}|} C(u_t, u'_{t'}) \cdot \mu_{t, t'},
\end{equation}
where $u=s_{\mathrm{src}}\oplus a_{\mathrm{src}} \oplus s'_{\mathrm{src}}$, $u^\prime=s_{\mathrm{tar}}\oplus a_{\mathrm{tar}} \oplus s'_{\mathrm{tar}}$ are the concatenated vectors, $C$ is the cost function and $M$ is the coupling matrices. After solving Equation~\ref{eq:ot} for the optimal coupling matrix $\mu^\star$, the OTDF computes the distance between a source domain sample and the target domain dataset via
\begin{equation*}
    d(u_t)=-\sum_{t^\prime=1}^{\left|\mathcal{D}_{\mathrm{tar}}\right|}C(u_t,u_{t^\prime})\mu_{t,t^\prime}^\star,\quad u_t=(s_{\mathrm{src}}^t,a^t_{\mathrm{src}},(s^\prime_{\mathrm{src}})^t)\sim\mathcal{D}_{\mathrm{src}}.
\end{equation*}
Then the critic is updated by
\begin{equation*}
    \mathcal{L}_Q = \mathbb{E}_{\mathcal{D}_{\mathrm{tar}}} \left[ (Q_\theta - \mathcal{T}Q_\theta)^2 \right] + \mathbb{E}_{(s, a, s') \sim \mathcal{D}_{\mathrm{src}}} \left[ \exp(\alpha \times d) \mathbb{I}(d > d_{\%}) (Q_\theta - \mathcal{T}Q_\theta)^2 \right].
\end{equation*}
Furthermore, OTDF incorporates a policy regularization term that forces the policy to stay close to the support of the target dataset:
\begin{equation*}
    \widehat{\mathcal{L}_\pi}=\mathcal{L}_\pi-\beta\times\mathbb{E}_{s\sim\mathcal{D}_{\mathrm{src}}\cup\mathcal{D}_{\mathrm{tar}}}\log\pi_{\mathrm{tar}}^b(\pi(\cdot|s)|s),
\end{equation*}
where $\mathcal{L}_\pi$ is the original policy optimization objective and $\beta$ is the weight coefficient, $\pi_\mathrm{tar}^b$ is the behavior policy of the target domain dataset learned with a CVAE. We run the official code\footnote{{https://github.com/dmksjfl/OTDF.git}} of OTDF for implementation.

\textbf{DVDF:} DVDF~\cite{qiao2025cross} introduces the concept of value alignment and demonstrates that, in addition to dynamics-aligned source samples, high-quality source samples are also crucial for cross-domain offline RL. Building on this insight, DVDF proposes a scoring function that trades off value alignment and dynamics alignment:
\begin{equation*}
    g(s,a,s^\prime)=\lambda\cdot h(s,a,s^\prime)+(1-\lambda)\cdot A(s,a),
\end{equation*}
where $\lambda$ is the trade-off coefficient, $h(\cdot)$ is computed by IGDF or OTDF, $A(s,a)$ is the advantage function pre-trained on the source domain dataset. Then DVDF performs data filtering for training similar to IGDF and OTDF. We implement DVDF ourselves, following the implementation details provided in the original paper.

\begin{algorithm}[t]
\caption{Unifying Value Alignment and Assignment (V2A)}
\label{alg:v2a}
\begin{algorithmic}[1]
\STATE \textbf{Require:} Source domain offline dataset $\mathcal{D}_{\mathrm{src}}$, target domain offline dataset $\mathcal{D}_{\mathrm{tar}}$
\STATE \textbf{Initialization:} Policy network $\pi_\eta$, value network $V_\beta$, $Q$ network $Q_\phi$, target $Q$ network $Q_{\phi^\prime}$, representation encoder $q_\psi$, dynamics decoder $p_\theta$, data selection ratio $\xi$, batch size $B$, importance coefficient $\alpha$, trade-off coefficient $\lambda$
\STATE \textbf{// Temporally-consistent modality representation learning}
\STATE Train $q_\psi$ and $p_\theta$ on $\mathcal{D}_\mathrm{src}$ by iteratively optimizing Equation~\ref{eq:estep} and Equation~\ref{eq:mstep} 
\STATE \textbf{// Modality-aware advantage learning}
\STATE Relabel $\mathcal{D}_\mathrm{src}$ with the learned representation $z\sim q_\psi$ to obtain $\mathcal{D}^\prime_\mathrm{src}$
\STATE Learning an advantage function $A(s,a,z)$ on $\mathcal{D}^\prime_\mathrm{src}$ in SQL manner
\STATE \textbf{// Obtaining the score function}
\STATE Obtain $h(s,a,s^\prime)$ via contrastive representation learning or optimal transport, {obtain the score function $f(s,a,s^\prime,z)=\lambda\cdot h(s,a,s^\prime)+(1-\lambda)\cdot A(s,a,z)$}
\STATE \textbf{// TD learning}
\FOR{each gradient step}
    \STATE Sample $b_{\mathrm{src}} := \{(s, a, r, s^\prime,z)\}$ from $\mathcal{D}^\prime_{\mathrm{src}}$
    \STATE Sample $b_{\mathrm{tar}} := \{(s, a, r, s^\prime)\}$ from $\mathcal{D}_{\mathrm{tar}}$
    \STATE {Sample the top-$\xi$ samples from $b_{\mathrm{src}}$ ranked by $f(s_{\mathrm{src}}, a_{\mathrm{src}},s^\prime_\mathrm{src},z)$}
    \STATE Compute weights $\omega(s, a, s')$ following:
    \STATE \quad {$\omega(s, a, s^\prime,z) = \mathbb{I}(f(s,a,s^\prime,z) \geq f_{\xi\%})$}
    \STATE \textbf{// Optimize the $V_\beta$ function}
    \STATE Compute loss $\mathcal{L}_V$:
    \STATE \quad $\mathcal{L}_V = \mathbb{E}_{(s, a) \sim \mathcal{D}^\prime_{\mathrm{src}} \cup \mathcal{D}_{\mathrm{tar}}} \left[ L_2^\tau\left( Q_\phi(s, a) - V_\beta(s) \right) \right]$
    \STATE Update $V_\beta$ using $\mathcal{L}_V$
    \STATE \textbf{// Optimize the $Q_\phi$ function}
    \STATE Compute loss $\mathcal{L}_Q$:
    \STATE \quad {$\mathcal{L}_Q = \frac{1}{2}\cdot\mathbb{E}_{(s, a, r, s^\prime) \sim \mathcal{D}_{\mathrm{tar}}} \left[ \left( Q_\phi(s, a) - (r + \gamma V_\beta(s')) \right)^2 \right]$}
    \STATE \quad {$+ \frac{1}{2}\cdot\mathbb{E}_{(s, a, r, s^\prime,z) \sim \mathcal{D}^\prime_{\mathrm{src}}} \left[ \omega(s, a, s^\prime,z)f(s,a,s^\prime,z) \left( Q_\phi(s, a) - (r + \gamma V_\beta(s')) \right)^2 \right]$}
    \STATE Update $Q_\phi$ using $\mathcal{L}_Q$
    \STATE \textbf{// Update target network}
    \STATE Update target network parameters: $\phi^\prime \leftarrow (1 - \mu) \phi + \mu \phi^\prime$
    \STATE \textbf{// Policy extraction (AWR)}
    \STATE Compute advantage $A(s, a) = Q_\phi(s, a) - V_\beta(s)$
    \STATE Optimize policy network $\pi_\eta$ using advantage-weighted regression (AWR):
    \STATE \quad $\mathcal{L}_\pi = \mathbb{E}_{(s, a) \sim \mathcal{D}^\prime_{\mathrm{src}} \cup \mathcal{D}_{\mathrm{tar}}} \left[ \exp(\alpha A(s, a)) \log \pi_\eta(a|s) \right]$
\ENDFOR
\end{algorithmic}
\end{algorithm}

\subsection{Implementation Details of V2A}
\label{sec:imple_v2a}

In this part, we present more implementation details of V2A.

\textbf{Learning the modality representation.} We model the representation encoder $q_\psi$ as an RNN network which takes a variable-horizon trajectory $\tau$ as input. The final hidden state of the RNN is used as the output representation $z$. The encoder $q_\psi$ consists of one input layer and three hidden layers, with a hidden size of 256. To mitigate the gradient vanishing problem that arises with long-horizon inputs, we employ an LSTM~\cite{hochreiter1997long} as the backbone of $q_\psi$, simply utilizing the \texttt{nn.LSTM()} module in PyTorch~\cite{paszke2019pytorch}. For the dynamics decoder $p_\theta$, we construct it as an ensemble dynamics model with ensemble size of $N$, where each ensemble member outputs a Gaussian distribution over the next state: $\left\{\widehat{P}_{\psi_i}=\mathcal{N}(\mu_{\psi_i},\Sigma_{\psi_i})\right\}_{i=1}^N$. Each ensemble model consists of an input layer, an output layer, and three hidden layers with a size of 256. In the E-step, for each trajectory in $\mathcal{D}_\mathrm{src}$, we randomly select one ensemble member to compute the likelihood. In the M-step, we perform maximum likelihood estimation (MLE) on all ensemble members.

\textbf{Obtaining the score function.} In V2A, the score function $f(\cdot)$ incorporates a dynamics-aware score function $h(\cdot)$ to measure dynamics alignment and a modality-aware advantage function $A(\cdot)$ to measure value alignment. We note that $h(\cdot)$ can be obtained through various methods, which makes V2A an algorithmic framework that can be integrated with multiple algorithms. In this work, we integrate V2A with IGDF and OTDF, respectively, using contrastive learning or optimal transport to obtain $h(\cdot)$. It is important to note that both $h(\cdot)$ and $A(\cdot)$ are used exclusively in the data filtering stage and are not updated during subsequent policy optimization steps.

\textbf{Data filtering and policy optimization.} After we obtain the relabeled source dataset $\mathcal{D}^\prime_\mathrm{src}$ and the score function $f(\cdot)$, we perform data filtering when learning the value function, and optimize the policy with IQL on $\mathcal{D}^\prime_\mathrm{src}\cup\mathcal{D}_\mathrm{tar}$. The detailed pseudo-code for V2A is presented in Algorithm~\ref{alg:v2a}.

\subsection{Hyperparameter Setup}

We list the major hyperparameter setup for V2A and several baselines in our experiments in Table~\ref{tab:hyperparameters}.

\begin{table}[htbp]
\centering
\caption{Hyperparameter setup for V2A and baseline methods}
\label{tab:hyperparameters}
\begin{tabular}{ll}
\toprule
\textbf{Hyperparameter} & \textbf{Value} \\
\midrule
\multicolumn{2}{l}{\textbf{Shared}} \\
Learning rate & $3 \times 10^{-4}$ \\
Optimizer & Adam~\cite{kingma2014adam} \\
Discount factor & 0.99 \\
Activation function & ReLU~\cite{nair2010rectified} \\
Source domain batch size & 128 \\
Target domain batch size & 128 \\
\midrule
\multicolumn{2}{l}{\textbf{IGDF}} \\
Encoder pretrained steps & 7000 \\
Importance coefficient & 1.0 \\
Data selection ratio $\xi$ & 25\% \\
\midrule
\multicolumn{2}{l}{\textbf{OTDF}} \\
CVAE training steps & 10000 \\
Number of sampled latent variables $M$ & 10 \\
Cost function & cosine \\
Data Selection ratio $\xi$ & $80\%$  \\
\midrule
\multicolumn{2}{l}{\textbf{DVDF}} \\
SQL pre-training steps & $1\times10^6$ \\
Trade-off coefficient $\lambda$ & 0.7\\
Data selection ratio $\xi$ & $50\%$\\
\midrule
\multicolumn{2}{l}{\textbf{V2A}} \\
Representation learning steps & $1\times 10^6$ \\
SQL pre-training steps & $1\times10^6$ \\
Trade-off coefficient $\lambda$ & 0.6 \\
Data selection ratio $\xi$ & $50\%$\\
Decoder ensemble size $N$ & 3 \\
\bottomrule
\end{tabular}
\end{table}

\subsection{Compute Infrastructure}

We present the compute infrastructure for all the experiments in Table~\ref{tab:compute_infra}.

\begin{table}[H]
    \centering
    \caption{Compute infrastructure}
    \begin{tabular}{c|c|c}
    \toprule
    CPU & GPU & Memory \\
    \midrule
    AMD Ryzen Threadripper PRO 5975WX & NVIDIA RTX A6000 $(\times4)$ & 503 GB \\
    \bottomrule
    \end{tabular}
    
    \label{tab:compute_infra}
\end{table}

\section{Additional Experimental Results}

In this section, we present additional experimental results beyond those reported in the main text.

\subsection{Ablation Study}

\begin{figure}
    \centering
    \includegraphics[width=0.98\linewidth]{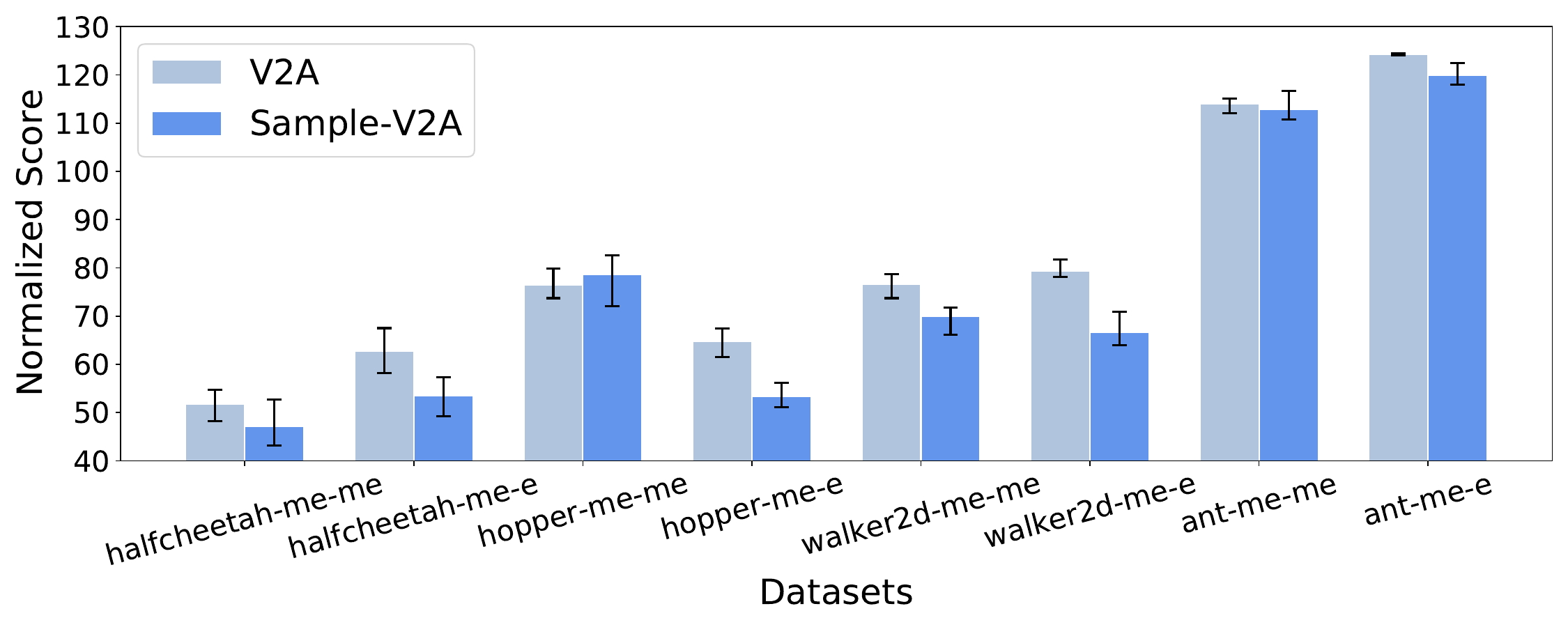}
    \caption{Ablation study on the necessity of temporally-consistent ELBO. Each result is averaged over 5 random seeds.}
    \label{fig:ablation}
\end{figure}

\textbf{Temporally-Consistent ELBO.} We conduct an ablation study on the necessity of temporally-consistent ELBO in Equation~\ref{eq:tc_elbo}. Specifically, we model $p_\theta$ as a typical fully-connected neural network and optimize the sample-level ELBO in Equation~\ref{eq:elbo}. We then proceed to perform advantage learning using the learned representations, while keeping the rest of the procedure identical to that of V2A. We refer to this variant as sample-V2A. Apart from the algorithmic difference, we keep all other experimental settings the same as those described in Section~\ref{sec:main_results}, and compare the performance of the original V2A and sample-V2A across multiple datasets. Each experiment is run with 5 random seeds. The experimental results are presented in Figure~\ref{fig:ablation}. We observe that the original V2A outperforms sample-V2A in 7 out of the 8 tasks compared. In the \texttt{hopper-me-me} task, the original V2A also achieves performance comparable to that of sample-V2A. These results suggest that using the temporally-consistent ELBO yields better performance than the sample-level ELBO.

\subsection{Extended Parameter Study}

\begin{figure}
    \centering
    \includegraphics[width=0.98\linewidth]{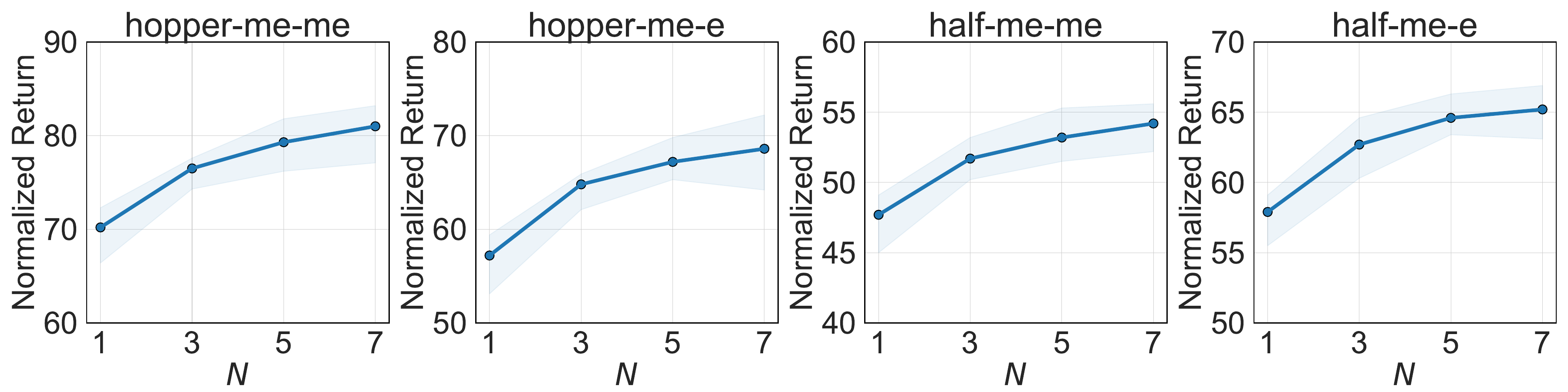}
    \caption{Parameter study on the effect of $N$. Results averaged across 5 random seeds.}
    \label{fig:parameter_appendix}
\end{figure}

In Section~\ref{sec:parameterstudy}, we have investigated the effects of $\lambda$ and $\xi$. In this part, we further examine the impact of the ensemble size $N$ of the dynamics model $p_\theta$.

\textbf{Ensemble size $N$.} Ensemble is a commonly used technique to reduce model prediction error~\cite{janner2019trust}. Typically, a larger ensemble size $N$ leads to higher prediction accuracy for the dynamics model, but it also incurs greater computational overhead during training. We vary $N$ across $\{1, 3, 5, 7\}$ and conduct experiments on four tasks; the results are shown in Figure~\ref{fig:parameter_appendix}. We observe that V2A achieves lower performance when $N=1$, which we attribute to the low accuracy of the dynamics model negatively impacting representation learning. Meanwhile, the performance differences among $N=3, 5, 7$ are not obvious. Therefore, for computational efficiency, we fix $N=3$ across all tasks.

\subsection{Time Cost Comparison}

\begin{figure}
    \centering
    \includegraphics[width=0.98\linewidth]{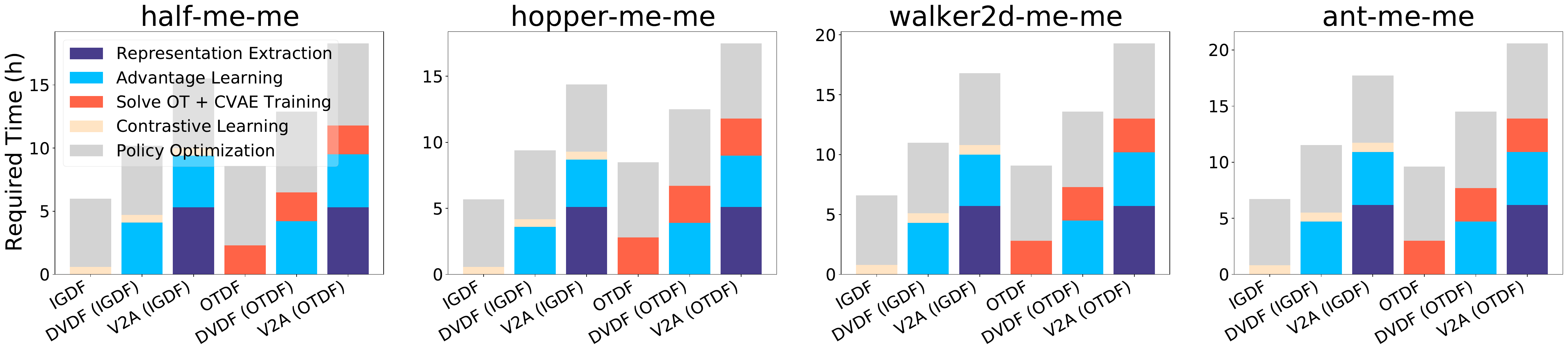}
    \caption{Time cost comparison between V2A and IGDF, OTDF, DVDF.}
    \label{fig:timecost}
\end{figure}

In Figure~\ref{fig:timecost}, we compare the training time required by V2A and baseline methods including IGDF, OTDF, and DVDF across four tasks. We break down the total time into five parts: representation extraction, advantage learning, solving optimal transport (OT) and training the CVAE (for OTDF), contrastive learning (for IGDF), and policy optimization. Each method incorporates a subset of these parts. Compared to other methods, V2A requires a similar amount of time for policy optimization, with the additional overhead mainly concentrated in the representation extraction and advantage learning stages. However, since these steps are decoupled from policy optimization, they can be precomputed, thereby introducing no extra overhead in subsequent experiments.

\section{LLM Usage Declaration}

We use LLM to polish the grammar and phrasing of our early draft. We affirm that LLM is not involved in the core contributions of this work, including the development of the method, the proof of theorems, and the experimental parts.

\end{document}